\documentclass{article}

\usepackage{microtype}
\usepackage{graphicx}
\usepackage{subcaption}
\usepackage{booktabs} % for professional tables

\usepackage{hyperref}

\usepackage[preprint]{icml2026}

\usepackage{amsmath}
\usepackage{amssymb}
\usepackage{mathtools}
\usepackage{amsthm}

\usepackage[capitalize,noabbrev]{cleveref}

\theoremstyle{plain}
\newtheorem{theorem}{Theorem}[section]

\theoremstyle{definition}

\theoremstyle{remark}

\usepackage[textsize=tiny]{todonotes}

\usepackage{wrapfig}
\usepackage{arydshln}
\usepackage{changepage}

\usepackage{bm}                % 推荐：更好的 \boldsymbol 支持（可选但推荐）
\newcommand{\TO}{\textbf{to}}

\usepackage{xcolor}         % colors
\usepackage{colortbl}
\usepackage{xspace}
\usepackage{graphicx} 
\usepackage{pifont}
\usepackage{amsmath}
\usepackage{tikz}
\usepackage[most]{tcolorbox}
\usepackage{fontawesome}

\usepackage{multirow}
\usepackage{array}
\usepackage{booktabs}
\usepackage{amsfonts}       % blackboard math symbols
\usepackage{nicefrac}       % compact symbols for 1/2, etc.
\usepackage{microtype}      % microtypography

\tcbuselibrary{skins}
\usepackage{multirow}
\usepackage{makecell}
\usepackage{enumitem}

\usetikzlibrary{arrows.meta}
\definecolor{LightGray}{gray}{0.95}
\definecolor{darkcerulean}{rgb}{0.03, 0.27, 0.49}
\definecolor{darkgreen1}{RGB}{0,100,0}   % 深绿色（类似森林绿）
\definecolor{darkgreen2}{RGB}{34,139,34} % ForestGreen（标准名，但显式定义更可靠）
\definecolor{darkred1}{RGB}{139,0,0}
\NewTotalTColorBox[auto counter]{\ExampleE}{ +m }{
    notitle,
    colback=blue!5!white,
    frame hidden,
    breakable,
    boxrule=0pt,
    sharp corners,
    borderline west={4pt}{0pt}{blue!50!black},
    enhanced % needed for borderline in some cases
}{
    \textcolor{blue!50!black}{%
        \sffamily
        \textbf{Motivation.}%
    }%
    #1
}

\newtcolorbox{yellowcolorbox}{
  colback=yellow!7,  
  colframe=orange!60, 
  breakable,
  coltitle=black,        
  fonttitle=\bfseries,   
  boxrule=1.5pt,         
  arc=3pt,               
  left=2pt, right=2pt, top=1pt, bottom=1pt 
}
\newtcolorbox{darkbluecolorbox}{
  colback=darkcerulean!5,  
  colframe=darkcerulean!100, 
  coltitle=black,        
  breakable,
  fonttitle=\bfseries,   
  boxrule=1.5pt,         
  arc=3pt,               
  left=2pt, right=2pt, top=1pt, bottom=1pt 
}

\newcommand{\circledone}{%
  \tikz[baseline=(char.base)]{
    \node[
      shape=circle,
      draw=black,              
      line width=0.5pt,        
      fill=black,              
      inner sep=0.2pt,         
      minimum size=0.6em,        
      text=white,              
      font=\bfseries
    ] (char) {1};
  }%
}
\newcommand{\circledtwo}{%
  \tikz[baseline=(char.base)]{
    \node[
      shape=circle,
      draw=black,              % 黑色边框（可选，增强轮廓）
      line width=0.5pt,        % 边框粗细（可调，也可设为0）
      fill=black,              % 填充黑色背景
      inner sep=0.2pt,         % 内边距，控制紧凑度
      minimum size=0.6em,        % 圆圈大小，与字号匹配
      text=white,              % 文字颜色为白色
      font=\bfseries           % 加粗字体
    ] (char) {2};
  }%
}

\newcommand{\downrightarrowTwo}{%
  \begin{tikzpicture}[baseline=-0.5ex, line width=0.5pt]
    \draw[->, >=Stealth] (0,0.15) -- (0,-0.15) -- (0.45,-0.15);
    \node[red] at (0.15,-0.15+0.15) {$\times$}; % 调整-0.4+0.1来控制×的位置
  \end{tikzpicture}%
}

\icmltitlerunning{TileQ: Efficient Low-Rank Quantization of Mixture-of-Experts with 2D Tiling}

\begin{document}

\twocolumn[
  \icmltitle{TileQ: Efficient Low-Rank Quantization of Mixture-of-Experts with 2D Tiling}

  % It is OKAY to include author information, even for blind submissions: the
  % style file will automatically remove it for you unless you've provided
  % the [accepted] option to the icml2026 package.

  % List of affiliations: The first argument should be a (short) identifier you
  % will use later to specify author affiliations Academic affiliations
  % should list Department, University, City, Region, Country Industry
  % affiliations should list Company, City, Region, Country

  % You can specify symbols, otherwise they are numbered in order. Ideally, you
  % should not use this facility. Affiliations will be numbered in order of
  % appearance and this is the preferred way.
  \icmlsetsymbol{equal}{*}

  \begin{icmlauthorlist}
    \icmlauthor{Hongyaoxing Gu}{rjs,gkd}
    \icmlauthor{Xinzhe Chen}{rjs,gkd}
    \icmlauthor{Lijuan Hu}{rjs}
    \icmlauthor{Fangfang Liu}{rjs}

    %\icmlauthor{}{sch}
    %\icmlauthor{}{sch}
  \end{icmlauthorlist}

  \icmlaffiliation{rjs}{Institute of Software Chinese Academy of Sciences}

  \icmlaffiliation{gkd}{University of Chinese Academy of Sciences}

  \icmlcorrespondingauthor{Fangfang Liu}{fangfang@iscas.ac.cn}

  % You may provide any keywords that you find helpful for describing your
  % paper; these are used to populate the "keywords" metadata in the PDF but
  % will not be shown in the document
  \icmlkeywords{Machine Learning, ICML}

  \vskip 0.3in
]

% this must go after the closing bracket ] following \twocolumn[ ...

% This command actually creates the footnote in the first column listing the
% affiliations and the copyright notice. The command takes one argument, which
% is text to display at the start of the footnote. The \icmlEqualContribution
% command is standard text for equal contribution. Remove it (just {}) if you
% do not need this facility.

% Use ONE of the following lines. DO NOT remove the command.
% If you have no special notice, KEEP empty braces:
\printAffiliationsAndNotice{}  % no special notice (required even if empty)
% Or, if applicable, use the standard equal contribution text:
% \printAffiliationsAndNotice{\icmlEqualContribution}

\begin{abstract}
Mixture-of-Experts (MoE) models achieve remarkable performance by sparsely activating specialized experts, yet their massive parameters in experts pose significant challenges for deployment. While low-rank quantization offers a promising route to compress MoE models, existing methods still incur nonnegligible memory overhead and inference latency. To address these limitations, we propose \textsc{TileQ}, a fine-tuning-free post-training quantization (PTQ) method that employs 2D-tiling structured low-rank quantization to share low-rank factors across both input and output dimensions of MoE experts. Furthermore, we introduce an efficient inference technique for \textsc{TileQ} that fuses multiple low-rank expert computations into a single-pass operation, significantly improving hardware utilization. Experiments show that \textsc{TileQ} cuts down additional memory usage up to 10$\times$ and reduces inference latency to $\sim$5\% while preserving state-of-the-art accuracy. Our code is available at: \url{https://anonymous.4open.science/r/TileQ-BE20}
\end{abstract}
\section{Introduction}
In recent years, the MoE architecture has emerged as a powerful paradigm for scaling large language models (LLMs), achieving remarkable success \cite{guo2025deepseek}. MoE models distribute computational workloads across a set of “experts,” and employ sparse activation mechanisms such that only a small subset of experts is activated during each inference step. This design enables MoE models to deliver performance comparable to larger dense models while substantially reducing computational costs~\cite{muennighoffolmoe,lin2024moe,kim2023mixture}.

However, despite their computational efficiency, MoE architectures introduce a critical challenge: substantial memory overhead. Although each input token activates only a small subset of experts during inference, the entire set of model parameters must remain resident in memory~\cite{rajbhandari2022deepspeed}. This large memory footprint severely limits the deployment of MoE models on resource-constrained hardware and significantly increases the cost of cloud inference.

To address the memory bottleneck, model compression techniques such as quantization have become a crucial avenue of research~\cite{libeyond,frantar2023gptq,li2026fixed}. Consequently, developing advanced MoE quantization methods is essential for lowering deployment barriers and inference costs. Among existing approaches, PTQ~\cite{frantar2023gptq,hubara2021accurate,chen2025moequant,zhang2025moqeimprovequantizationmodel} offers significant practical advantages over quantization-aware training (QAT)~\cite{chen2025efficientqat}, which enabling rapid deployment with minimal computational overhead without retraining. Moreover, recent SOTA PTQs have narrowed the accuracy gap with QAT~\cite{tseng2024qtip,lin2024duquant}, making it particularly well-suited for MoE models that prioritize efficient inference without compromising fidelity.

Among PTQs, low-rank methods have demonstrated strong performance in MoE quantization \cite{huangmilo,ICLR2025_f34f0630}. \textbf{Nevertheless, the challenge lies in the trade-off among three key points}: \textbf{(i).}Simple quantization schemes suffer from severe accuracy degradation under extremely low bit-widths; \textbf{(ii).}Experts are small and sparse in the MoE models, where low-rank PTQs often incur significant additional memory overhead;  \textbf{(iii).}High-accuracy quantization methods typically introduce considerable inference latency. Moreover, fine-tuning (FT) methods degrade model generalization, limiting their applicability for rapid task adaptation~\cite{zhang2024qera,zhang2024lqer}.
%微调方法需要更长的runtime,并且会降低模型泛化性

To tackle these challenges, our aim is \textit{`achieving high accuracy and compression ratios while keeping minimal inference latency'}. Our contributions are as follows:
\begin{itemize}[itemsep=0pt, parsep=0pt, topsep=0pt]
    \item To overcome the limitation of traditional low-rank methods, we propose TILEQ—a fine-tuning-free PTQ approach for MoE models that leverages a 2D-tiling layout based on singular subspace clustering to enable efficient shared representations across experts. 
    \item We proposed an optimized fusion algorithm that combines multiple low-rank expert GPU kernels across all tokens into a single-pass operation, significantly accelerating both the prefill and decode phases.
    \item Experiments show that \textsc{TileQ} executes \textbf{swiftly} without \textbf{fine-tuning} and achieves \textbf{state-of-the-art} quantization accuracy while saving up to \textbf{90\%} low-rank extra memory and reducing inference latency to \textbf{5\%}.
\end{itemize}

\section{Related Works}
\label{sec_relatedworks}
\subsection{Post-Training-Quantization}
\label{sec_rw_ptq}
In post-training quantization (PTQ) for large language models, existing methodologies can be systematically organized into three hierarchical categories:

\paragraph{Base Format.}  
These foundational quantization schemes establish the core framework for PTQ:
\begin{itemize}[itemsep=0pt, parsep=0pt, topsep=0pt]
    \item \textit{Symmetric Format}~\cite{nagel2020up}: Employs a symmetric grid centered at zero to simplify arithmetic operations, but is highly susceptible to outliers.
    \item \textit{Asymmetric Format}: Introduces new formats  or learnable zero-point offset ~\cite{ligptaq, rusci2020memory} to better align the quantization range with the empirical weight distribution.
    \item \textit{Vector Format}~\cite{chee2023quip,van2024gptvq,liu2024vptq}: Represents groups of weights as vectors and compresses them by clustering or codebook-based encoding.
\end{itemize}

\paragraph{Optimization methods.}  
These techniques build upon base methods by incorporating model-aware refinements to mitigate quantization-induced degradation:
\begin{itemize}[itemsep=0pt, parsep=0pt, topsep=0pt]
    \item \textit{Hessian optimization}~\cite{frantar2023gptq}: Optimizes quantized weights $Q$ by minimizing a Hessian-weighted reconstruction loss, typically formulated as,
    \begin{equation}
    \mathcal{L}(W) = E_{ X}\bigg[\Vert \boldsymbol{W}\boldsymbol X - \hat{\boldsymbol{W}}\boldsymbol X  \Vert \bigg ],
    \label{eq_hessian_loss}
    \end{equation}
    where $H$ is a proxy Hessian matrix.
    \item \textit{Activation-aware scaling}: Identifies channels with salient activation magnitudes and adjusts their corresponding weight scales to minimize downstream performance loss ~\cite{lin2024awq,xiao2023smoothquant,Zhao_Liu_Wang_Guan_Wang_Jiang_Guan_2026}.
    \item \textit{Outlier mitigation}: Addresses outliers through:
    \begin{itemize}[itemsep=0pt, parsep=0pt, topsep=0pt]
        \item \textit{Rotation}: Exploits the orthogonality invariance through Hadamard rotations~\cite{ashkboos2024quarot,chee2023quip,lin2024duquant} or learnable orthogonal matrices~\cite{spinquant}.
        \item \textit{Clipping}: Cutoff outliers to constrain numerical range, thereby stabilizing quantization and improving overall fidelity~\cite{shao2023omniquant}.
    \end{itemize}
    \item \textit{Low-rank and sparsification}: Decomposes weight matrices into low-rank~\cite{ICLR2025_f34f0630,gu2026loproenhancinglowrankquantization,zhao2026bwla} factors or enforces structured sparsity~\cite{dettmersspqr}, lowering both memory footprint and error propagation during quantization.
\end{itemize}

\paragraph{Fine-tuning.}  
These approaches recover lost accuracy through lightweight adaptation, though task-specific fine-tuning may compromise model generality:
\begin{itemize}[itemsep=0pt, parsep=0pt, topsep=0pt]
    \item \textit{LoRA-based FT}~\cite{dettmers2023qlora,zhang2024qera}: Integrates low-rank adapters into the quantized model and updates only these lightweight modules, achieving significant performance recovery with minimal overhead.
    \item \textit{Self-supervised FT}: Leverages unlabeled data and self-supervised objectives to recalibrate internal representations of the quantized model~\cite{chen2020adversarial}.
\end{itemize}
Contemporary PTQs typically compose these strategies in a modular fashion: they begin with a base format, enhance it with one or more optimization techniques tailored to model architecture and data characteristics, and optionally apply FT when higher accuracy is required.

\subsection{Low-Rank Methods in MoE Quantization}
In MoE models, about 95\% of the parameters reside in the MLP experts and low-rank methods has emerged as a promising direction for MoE quantization due to its ability to simultaneously reduce the storage consumption of experts. Formally, low-rank quantization approximates a weight matrix $\boldsymbol W \in \mathbb{R}^{i \times o}$ as:
\begin{equation}
\label{eq_moe_lora}
\boldsymbol W \approx Q(\boldsymbol W-\boldsymbol W_r) + \boldsymbol W_r \ ;  \  \boldsymbol W_r = \boldsymbol U \boldsymbol V = \text{SVD}(\boldsymbol W),\
\end{equation}
where $\boldsymbol U \in \mathbb{R}^{i \times r},\ \boldsymbol V \in \mathbb{R}^{r \times o}$ and $r \ll \min(i, o)$ stands for the rank and $Q(\cdot)$ denotes a quantization operator applied to the low-rank factors. Traditional methods applies low-rank quantization to each expert independently, without considering relationships or redundancies across other experts and typically stored in 16-bit~\cite{zhang2024lqer,ICLR2025_f34f0630,moei2,chen2025eac}. However, these methods yields limited compression gains when the number of experts $K$ is large.

To address this limitation, PTQs for MoE apply shared low-rank basis $\boldsymbol U \in \mathbb{R}^{i \times r}$ and specific $\boldsymbol V \in \mathbb{R}^{r \times (o\times K)}$ in 1D-tiling to capture common structures across all experts \cite{moesvd}.

% Although share-$\boldsymbol U$ methods achieves a higher compression ratio, it remains suboptimal because it exploits redundancy in only one dimension. Consider the Qwen3-30B-A3B model with $n = 128$ experts and expert dimensions $i = 768$, $o = 2048$. With rank $r = 32$:
% \begin{itemize}
%     \item Per-expert decomposition yields a normalized bit overhead of \textbf{0.92}.
%     \item 1D shared-$U$ decomposition reduces this to \textbf{0.26}.
% \end{itemize}
% While this appears favorable, the residual overhead is still non-negligible under extreme 2-bit quantization. Moreover, during inference, the $V$ matrices for different experts remain disjoint, necessitating separate GEMM operations per expert and incurring up to \textbf{30\% additional latency} due to lack of computational fusion.

Although the shared singular method achieves a higher compression ratio, it only cutoff half the memory costs and still introduces significant memory overhead in sparse MoE models~\cite{qwen3technicalreport}. Furthermore, during inference, the $\boldsymbol{V}$ matrices of different experts remain disjoint, necessitating separate GEMM operations for each expert. The absence of computational fusion further leads to an additional latency overhead of more than \textbf{50\%} \cite{huangmilo}.

To address both the compression ceiling and inference inefficiency, we propose a \textbf{fine-tuning-free}  PTQ method \textsc{TileQ}, achieving significantly \textbf{higher compression} ratios and \textbf{minimum latency} in inference.

\begin{figure*}[t]
\centering
\includegraphics[width=1\linewidth]{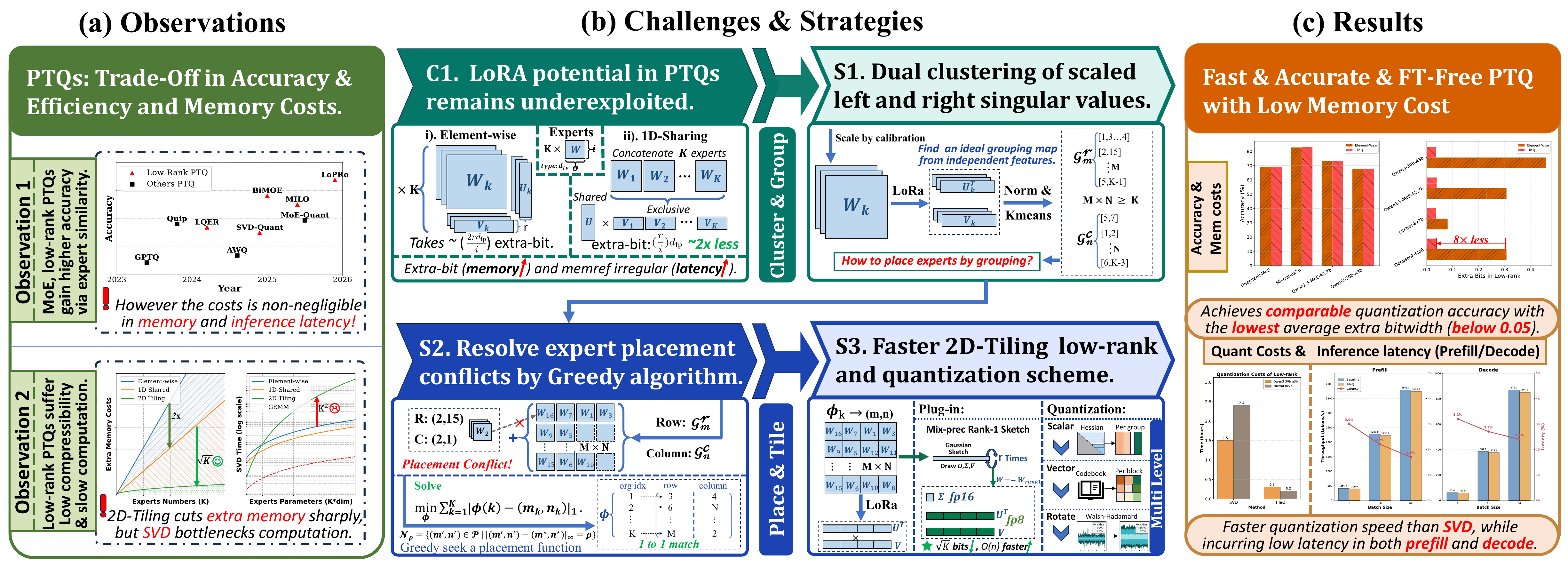}
\caption{Overview of \textsc{TileQ}. (a) discusses the observations and motivations of low-rank PTQs, (b) presents the challenges and strategies of our algorithm, and (c) shows the evaluations results. The pseudo-code is shown in Algorithm \ref{algo_Tileq}.}
\label{fig_tileq_quant}
\vspace{-1.5em}
\end{figure*}

\section{Method}
% \begin{tcolorbox}[title=Tile-Quant]
\textbf{Model:}  
Denote that each MoE module has \( K \) experts and includes a router \( \boldsymbol G \), so that \( \mathcal{W} = \{\boldsymbol W_k \in \mathbb{R}^{o \times i} \}_{k=1}^K \). For an input \( \boldsymbol{X} \in \mathbb{R}^{B \times i} \), the output is given by
\begin{equation}
\label{eq_moe_forward}
\mathrm{MoE}(\boldsymbol{X}) = \sum_{\boldsymbol{W}_k \in e_{k,\boldsymbol{X}}} \boldsymbol{G}_k(\boldsymbol{X}) \cdot \boldsymbol{W}_k(\boldsymbol{X}),
\end{equation}
where \( \boldsymbol{G}(\boldsymbol{x}) \) provides routing scores and \( e_{k,\boldsymbol{x}} \) denotes the set of selected top-\(\mathcal{K}\) experts. The resulting output is a weighted combination of the outputs of the selected experts.

In this section, we focus on the experts in MoE models and propose a novel 2D low-rank PTQ method. Based on the proposed 2D-Tiling format (\S \ref{sec_tilq_quant}), we further introduce a fusion algorithm to enhance the inference speed for MoE models in both prefilling and decoding (\S \ref{sec_tilq_inference}).
\subsection{TileQ: 2D-Tiling Structured Low-Rank Quantization for MoE Models}
\label{sec_tilq_quant}
\ExampleE{
    In MoE models, expert similarity serves as a key criterion for enabling efficient pruning and compression or redundant experts can be identified by the gating matrix~\cite{moesvd,frantar2023qmoe}. However, existing low-rank methods either decompose each expert independently or concatenate all experts in one dimension to share singular \(\boldsymbol  U \) or $\boldsymbol V$, resulting in limited compression ratio and non-negligible inference latency \cite{huangmilo}.
}

\textbf{Solution.} Inspired by expert similarity, we propose a fine-tuning-free low-rank PTQ method as shown in Figure \ref{fig_tileq_quant}: all experts are arranged into a 2D tiled layout, explicitly leveraging similarity along both row and column to jointly share the $\boldsymbol U$ and $\boldsymbol V$ matrices in $(K\rightarrow M*N)$ --- for an expert weight $\boldsymbol W_k$ in Eq.\ref{eq_moe_forward},
\begin{equation}
\boldsymbol W_k =  \boldsymbol {{U}}_p \boldsymbol{\Sigma}\boldsymbol{{V}}_q + \hat{\boldsymbol W_k}, \ \{p,q\} = \phi_k
\label{eq_moe_solution}
\end{equation}
where $\boldsymbol U = [\boldsymbol U_{{1}}, \boldsymbol U_{{2}}, \cdots , \boldsymbol U_{{M}}]^\top \in \mathbb{R}^{{(M\times i)}\times {r}}$ and $\boldsymbol V = [\boldsymbol V_{{1}}, \boldsymbol V_{{2}}, \cdots ,  \boldsymbol V_{{N}}] \in \mathbb{R}^{{r}\times {(N\times o)}}$; $\phi$ is a position table that maps the weights at the original $t$-th expert to $\{p,q\}$-blocks in the 2D low-rank matrix; $\hat{\boldsymbol W_k}$ is a quantization matrix.  %This can be processed in 5 steps and shown in Algorithm \ref{algo_Tileq}.

\paragraph{Step 1: Scaling.}
Since important activation values significantly affect quantization accuracy~\cite{lin2024awq}, a diagonal scaling matrix is encoded into the weight matrix at the beginning. For each \ \( W_k \), form
\begin{equation}
\tilde{\boldsymbol  W}_k^{scale} = \boldsymbol  W_k s_k  \in \mathbb{R}^{i \times o},
\label{eq_scaling_weight}
\end{equation}
where $s_k$ is a scaling vector calculated by calibration. This transposition ensures that the left singular vectors of $\tilde{W}_k$ correspond to the input feature space (facilitating clustering of $\boldsymbol U$), while the right match the output (for $\boldsymbol V$ clustering).

\paragraph{Step 2: Extracting.}
For each $\tilde{W}_k^{scale} $, we compute an approximate rank-$r_0$ approximation:
\begin{equation}
\tilde{\boldsymbol W}_k^{scale}  \approx \boldsymbol U_k \boldsymbol{\Sigma}_k \boldsymbol V_k^{\top},
\label{eq_svd}
\end{equation}
then vectorization flatten and normalize $\boldsymbol U_k,\boldsymbol V_k$ into $\mathbb{R}^{ri}$ and $\mathbb{R}^{ro}$ to extract norm feature embeddings:
\begin{equation}
u_k = \frac{\operatorname{vec}(\boldsymbol U_k)}{\|\operatorname{vec}(\boldsymbol U_k)\|_2} , \quad v_k = \frac{\operatorname{vec}(\boldsymbol V_k)}{\|\operatorname{vec}(\boldsymbol V_k)\|_2}.
\label{eq_vec_uv}
\end{equation}
\paragraph{Step 3: Clustering.}
\label{sec_3_para3}
We perform biclustering on the normalized embeddings to assign each expert $k$ an ideal grid coordinate $(m_k, n_k)$. Then apply KMeans to obtain:
\begin{equation}
\left\{
\begin{aligned}
    \{ \mathcal{G}_m^{\text{row}} \}_{m=0}^{M-1} &= \text{KMeans}\big(\{ \bar{u}_k \}_{k=1}^K,\, n 
_{clusters}=M\big), \\
    \{ \mathcal{G}_n^{\text{col}} \}_{n=0}^{N-1} &= \text{KMeans}\big(\{ \bar{v}_k \}_{k=1}^K,\, n 
_{clusters}=N\big),
\end{aligned}
\right.
\label{eq_bicluster}
\end{equation}
where $m$ and $n$ denote the cluster indices of row and column and the resulting partitions are defined as $\mathcal{G}_m^{\text{row}} = \{ k \mid r_k = m \}$ and $\mathcal{G}_n^{\text{col}} = \{ k \mid c_k = N \}$. This assignment encourages experts with similar singular matrix to share rows or columns, yielding the ideal tiling $(m_k, n_k)$ for expert $k$.

\paragraph{Step 4: 2D-Tiling.}
The ideal tiling in \hyperref[sec_3_para3]{\P\textbf{Step 3}} is hardly met, as multiple experts may be simultaneously assigned to the same position. The mapping must satisfy: \textit{\textbf{(i)}} no two blocks occupy the same tile; \textit{\textbf{(ii)}} each block remains as close as possible to its ideal location.

Formally, we seek a placement function $\phi(k) \mapsto (m, n)$ that approximately solves
\begin{equation}
\min_{\phi} \sum_{k=1}^K \left\| \phi(k) - (m_k, n_k) \right\|_1.
\label{eq_location_phi}
\end{equation}
To solve this, we use a greedy spatial allocation strategy. For each position $(m, n)$, we define its anchor in the physical grid as $(m^\star, n^\star) = (\min(m, M{-}1), \min(n, N{-}1))$. Blocks are placed in unoccupied cells around the anchor using a concentric search ordered by Chebyshev distance:
\begin{equation}
\mathcal{N}_\rho = \left\{ (m', n') \in \mathcal{P} \,\middle|\, \big\| (m', n') - (m^\star, n^\star) \big\|_\infty = \rho \right\},
\label{eq_cheb_dis}
\end{equation}
for $\rho = 0, 1, 2, \dots$. Once placements $\{(m_k, n_k)\}_{k=1}^K$ are determined, we construct the tiling matrix $W_{\text{big}}$ as:
\begin{equation}
% \begin{split}
\boldsymbol W_{\text{big}} = \sum_{k=1}^K \mathcal{E}_{m_k, n_k} \otimes \tilde{W}_k  \in \mathbb{R}^{M i \times N o},
%  &= \begin{bmatrix}
% \boldsymbol{W}_{0,0} & \cdots & \boldsymbol{W}_{0,N-1} \\
% \vdots & \ddots & \vdots \\
% \boldsymbol{W}_{M-1,0} & \cdots & \boldsymbol{W}_{M-1,N-1}
% \end{bmatrix}
% \end{split}
\label{eq_block_mosaic}
\end{equation}
where $\mathcal{E}_{m, n} \in \{0,1\}^{M \times N}$ is the basis matrix with $1$ at $(m,n)$, and $\otimes$ denotes the Kronecker product in block (i.e., placing $\tilde{\boldsymbol W}_k$ at the $(m_k, n_k)$-th tile).

The resulting mapping is $\phi(k) \mapsto (m_k, n_k)$, and the rank-$r$ shared tiling matrix is formed as:
\begin{equation}
\tilde{\boldsymbol W}_k = (\boldsymbol W_{\text{big}})^{rank}_{\phi(k)} s_k^{-1} = {\boldsymbol U}_{m_k} \boldsymbol{\Sigma} {\boldsymbol V}_{n_k}^\top  s_k^{-1},
\end{equation}
Here, we obtain the first form in Eq \ref{eq_moe_solution}, which corresponds to the low-rank component of TileQ.

\paragraph{Step 5: Quantization.}
Now we quantize the residual after expert low-rank decomposition:
$$
\hat{\boldsymbol W} = \text{Quant}(\boldsymbol W - \tilde{\boldsymbol W}) = \mathcal{Q}(\boldsymbol R).
$$
TileQ targets the low-rank component in MoE quantization and remains compatible with mainstream quantization methods on the residual. Aiming for optimal accuracy, we choose three strategies from \S \ref{sec_rw_ptq}:
$\circledone$Apply \textsc{gptq}/\textsc{gptvq}’s hessian optimization for scalar- and vector-level quantization, respectively.
$\circledtwo$Incorporate \textsc{LoPRo}’s~\cite{gu2026loproenhancinglowrankquantization} low-rank rotation to maximize quantization accuracy.

\paragraph{$^*$Plug-in.}
In \textsc{TileQ}, since feature extraction ($K\times SVD(\mathbb{R}^{o \times i})$) and decomposition of the 2D-tiling matrix ($SVD(\mathbb{R}^{(M\times o) \times (N\times i)})$) are computationally expensive, we replace it with Gaussian Sketch approximation \cite{gul2026flrqfasterllmquantization}.
Calculate the rank-1 matrix $\boldsymbol W_1= \boldsymbol U_1 \boldsymbol \Sigma_1 \boldsymbol V_1$ by:
\begin{equation}
\begin{aligned}
    \boldsymbol U = \frac{ \boldsymbol Q }{ \boldsymbol \Vert \boldsymbol Q \Vert}, \boldsymbol \Sigma = \frac{\Vert\boldsymbol Q^\top \boldsymbol W\Vert}{\Vert\boldsymbol Q\Vert}, \boldsymbol V = \frac{\boldsymbol Q^\top \boldsymbol W}{\Vert\boldsymbol Q^\top \boldsymbol W\Vert},
\end{aligned}
\label{eq_Lowrank_ALAR}
\end{equation}
where $S \in \mathbb{R}^i$ is a Gaussian sketch vector, $it$ is the iteration times and $\boldsymbol Q = (\boldsymbol W\boldsymbol W^\top)^{it}\boldsymbol W\boldsymbol{\Sigma}$. Then, we set $\boldsymbol W =\boldsymbol W - \boldsymbol W_1 $ and do the same process for $r$ times to have any rank approximation. Since the singular value matrix $ \mathbf{\Sigma} $ is diagonal with low storage cost, by storing $ \boldsymbol{U} $ and $ \boldsymbol{V} $ in \textit{fp8} while retaining $ \boldsymbol{\Sigma} $ in \textit{fp16}, the storage overhead can be reduced by half. This iterative approach achieves nearly identical to SVD while being way faster in computation.

\begin{darkbluecolorbox}
    \underline{\textbf{Insights \faLightbulbO.}} \textsc{TileQ} jointly structures MoE experts into a 2D-tiling low-rank layout quantization without fine-tuning. This pipeline explicitly exploits expert similarity to enable shared $\boldsymbol{U}$ and $\boldsymbol{V}$ factors across tiles. Our method is highly flexible, supporting quantization methods from scalar to vector-wise and other optimization. Furthermore, we replace costly SVD with a fast sketch-low-rank approximation and adopts a mixed-precision storage strategy, which simultaneously reduces quantization cost and improves model compression ratio.
\end{darkbluecolorbox}

\begin{figure*}[h]
\centering
\includegraphics[width=1\linewidth]{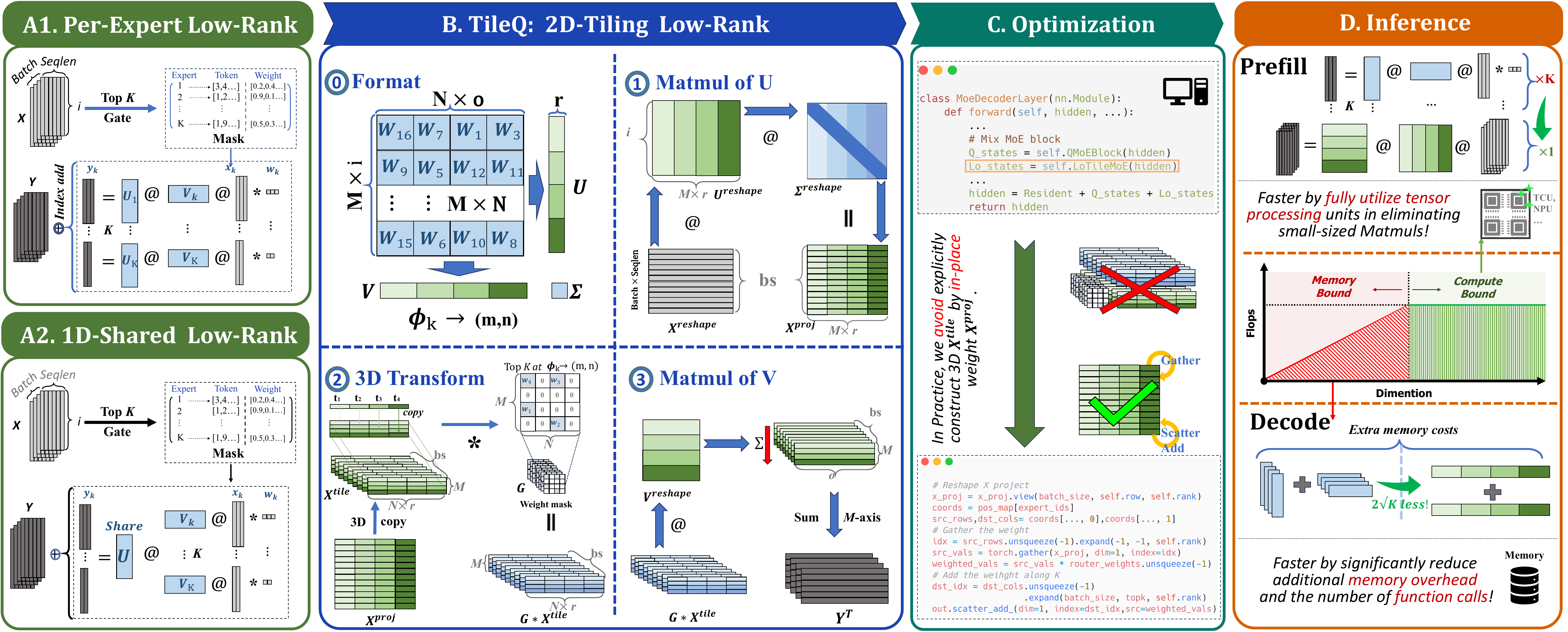}
\caption{Efficient MoE with Low-Rank Decomposition in \textsc{TileQ}: Enable 2D-Tiling to In-Place Optimization and Fast Inference}
\label{fig_tileq_inference}
\vspace{-1.5em}
\end{figure*}

\subsection{Efficient Fused Sparse Low-Rank Inference in 2D-Tiling}
\label{sec_tilq_inference}
\ExampleE{Traditional low-rank PTQs only takes minor latency overhead (under 10\%) in dense models~\cite{ICLR2025_f34f0630}. However, in MoE models, experts are much smaller, making the overhead non-negligible (from \textbf{15}\% to more than \textbf{75}\%). Shared-\(U\) approach like \textsc{Milo}~\cite{huangmilo} reduces this overhead to around \textbf{10\%} on Mixtral-8x7B through kernel optimizations, but the latency rises to more than \textbf{50\%} on \textbf{newer, sparser models} such as Qwen3-30B-A3B, significantly degrading practical usability.
}

\textbf{Solution.} We propose an efficient low-rank inference algorithm based on 2D-Tile low-rank shown in Figure \ref{fig_tileq_inference}, formulated as follows:
\begin{equation}
\mathrm{MoE}(\boldsymbol{X}) = \mathrm{QMoE}(\boldsymbol{X}) + \mathrm{LoTileMoE(\boldsymbol{X})},
\label{eq_moe_infer_tileQ}
\end{equation}
where $\mathrm{QMoE}$ denotes the standard quantized MoE forward pass, where the weights in Eq.\ref{eq_moe_forward} are replaced with quantized matrices, ensuring compatibility with established inference frameworks \cite{kwon2023efficientvllm, zheng2024sglang}. Meanwhile, $\mathrm{LoTileMoE}$ is an efficient inference method for the tiled low-rank. It exploits the low-rank structure to fused compute all low-rank experts and performs as follows:

For each token \( b \in \{1,\dots,B\} \) and its \( t\text{-th} \in \{1,\dots,\mathcal{K}\} \) selected expert. Denote the set is $E = \{ e_{b,t} \mid b \in [B],\, t \in [\mathcal{K}] \}$ and \(g_{b,t}\) is the routing weight from $\boldsymbol G$. The mapping \(\phi\) assigns this expert to a unique tile in the 2D grid:
\[
(m_{b,t}, n_{b,t}) := \phi(e_{b,t}) ,
\]
Firstly, define the scaled left factor as $\boldsymbol{U}\boldsymbol{\Sigma} \in \mathbb{R}^{(Mi) \times r}$ and project the entire batch:
\[
\boldsymbol{X}_{\text{proj}} = \boldsymbol{X} \cdot (\boldsymbol{U}\boldsymbol{\Sigma})_{\text{reshape}} \in \mathbb{R}^{B \times (M r)},
\]
where $(\boldsymbol{U}\boldsymbol{\Sigma})_{\text{reshape}} \in \mathbb{R}^{i \times (M r)}$ reshapes the tall matrix into feature-aligned blocks. This is then reshaped into a 3D tensor and broadcast across the column dimension:
\[
\boldsymbol{X}_{\text{tile}} = \operatorname{reshape}(\boldsymbol{X}_{\text{proj}}, [b, M, r]) \otimes \mathbf{1}_{N}^\top \in \mathbb{R}^{b \times M \times (r N)},
\]
where $\otimes$ denotes channel-wise repetition. Next, we zero out all unselected tiles and inject routing-weighted contributions only at active expert locations. For each token–expert pair $(b,t)$, the corresponding slice in $\boldsymbol{X}_{\text{tile}}$ spans columns $[n_{b,t} r, (n_{b,t}{+}1) r)$ within row $m_{b,t}$. We construct an output accumulator $\boldsymbol{A} \in \mathbb{R}^{b \times M \times (r N)}$, and update the accumulator as
\[
\boldsymbol{A}[b, m_{b,t}]^{(n_{b,t})} \gets g_{b,t} \cdot \boldsymbol{X}_{\text{tile}}[b, m_{b,t}]^{(n_{b,t})},
\]
where the superscript $(n)$ denotes the $n$-th contiguous block of size $r$ along the last dimension (i.e., indices $[nr, (n+1)r)$). Then a batched matrix multiplication yields:
\[
\boldsymbol{Y} = \boldsymbol{A} \cdot \boldsymbol{V}_{\text{reshape}}^\top \in \mathbb{R}^{b \times M \times o},
\]
where the same reshape is performed to $\boldsymbol{V}$ as $\boldsymbol{U}$ and each row slice $\boldsymbol{Y}[b, m, :]$ corresponds to the contribution of all experts assigned to tile row $m$. The outputs are aggregated by summing over the top-$\mathcal{K}$ dimension:
\begin{equation}
\mathrm{LoTileMoE}(\boldsymbol{X}) = \sum_{m=0}^{M-1} \boldsymbol{Y}[b, m, :] \in \mathbb{R}^{b \times o}.
\label{eq_LoTileMoE}
\end{equation}
This formulation replaces irregular sparse accesses with regular tensor operations (reshape, matrix multiplication $\cdots$), which are highly optimized on modern accelerators. The method thus achieves an optimal trade-off between memory efficiency and hardware utilization.

\paragraph{$^*$Tricks.}
In implementation, we avoid explicitly constructing the accumulating tensors (Figure~\ref{fig_tileq_inference}.C). Instead, we gather the routing-weighted activations directly into a compact 2D buffer $\boldsymbol{\Sigma} \in \mathbb{R}^{b \times (N r)}$, where each column $n$ holds the sum of all activations assigned to that tile:
\begin{equation}
\boldsymbol{S}[b]^{(n)} = \sum_{t:\, n_{b,t}=n} g_{b,t}\,\boldsymbol{X}_{\text{proj}}[b]^{(m_{b,t})}, \quad \forall \,n\in[N].
\label{eq_avoid3d}
\end{equation}
The final output is then computed as $\boldsymbol{S} \cdot \boldsymbol{V}_{\text{flat}}^\top$. This formulation reduces memory complexity from $O(b M r N)$ to $O(b N r)$ and enables fully parallel execution without instantiating large intermediate tensors.

\begin{darkbluecolorbox}
    \underline{\textbf{Insights \faLightbulbO.}} $\mathrm{LoTileMoE}$ enables efficient batched inference for low-rank quantization in 2D-tiling. It replaces irregular expert dispatch with regular tensor operations—projecting inputs once to eliminate dynamic loops and sparse memory accesses, thereby incurring \textbf{negligible latency} while leveraging highly optimized dense kernels (demonstrated in \S \ref{sec_exp}).
\end{darkbluecolorbox}

\subsection{Analysis of \textsc{TileQ}}
\paragraph{Compression ratio.}
\textsc{TileQ} achieves superior compression by exploiting 2D tiling across experts, achieving \textbf{$\sqrt{K}$} lower storage cost than 1D baselines and \textbf{$2\sqrt{K}$} less compared to element-wise methods. Especially beneficial in larger, sparser MoE models. (Detailed in Appendix \ref{appendix_compress_ratio})

\paragraph{Quantization Error.}
In \textsc{TileQ}, the error of $e^{(k)}$ satisfies
\begin{equation}
\mathcal{E}_{\mathrm{TileQ}}^{(k)} \leq \mathcal{E}_{\mathrm{ind}}^{(k)} +\epsilon_{cluster},
\end{equation}
where $\mathcal{E}_{\mathrm{ind}}^{(k)}$ stands for the error of per-expert low-rank methods and $\epsilon_{cluster}$ is the error introduced by low-rank under 2D-Tiling. Consequently, \textsc{TileQ} achieves quantization accuracy on par with independent per-expert compression. (Proved in Appendix \ref{appendix_error_bound})

 % In this format, the error of \textsc{TileQ} is nearly close to per-expert low-rank method, i.e., $\mathcal{E}^{(k)} \approx \mathcal{E}_{\mathrm{ind}}^{(k)}$, demonstrating that \textsc{TileQ} achieves near-optimal compression quality despite parameter sharing across experts (Detailed in Appendix \ref{appendix_error_bound}).

\paragraph{Inference latency.}
By replacing irregular per-token expert dispatch with dense GEMMs and vectorized operations, \textsc{TileQ} avoids the kernel launch and reduces memory bandwidth bottleneck of MoE and achieves a complexity  
\begin{equation}
C_{TileQ} = \mathcal{O}(B i M r + B N r o + B \mathcal{K} r).
\end{equation}
By aligning computation with tensor-specific hardware, \textsc{TileQ} has significantly lower latency in both prefill and decode stages. (Detailed in Appendix \ref{appendix_complexity})

\begin{table*}[t]
   \setlength{\tabcolsep}{4pt}
    \renewcommand{\arraystretch}{1.2}
  \small
  \centering
  \caption{Quantization results of TileQ and baseline methods under 2-bit and 3-bit settings. Here, Extra.Bits ($X_y$) denote the average bitwidths allocated to the group-wise quantization scales and the low-rank components, respectively. We report perplexity on WikiText-2 and accuracy on six downstream tasks: ARC-Challenge (AC), ARC-Easy (AE), PIQA (PQ), WinoGrande (WI), MMLU (MU), and HellaSwag (HS). \textsc{MoEQ} is in short of \textsc{MoEQuant}. Further experimental details are provided in Appendix~\ref{appendix_ad_implement}.}
    \begin{tabular}{|c|c|c|c|cccccc|c|cccccc|}
    \Xhline{4\arrayrulewidth}
    \multirow{3}[0]{*}{Models} & \multirow{3}[0]{*}{Methods} & \multirow{2}[0]{*}{Extra.} & \multicolumn{7}{c|}{2-BiT Quantization}                & \multicolumn{7}{c|}{3-BiT Quantization} \\
    \cline{4-17}
          &       &     & \multicolumn{1}{c|}{PPL $\downarrow$} & \multicolumn{6}{c|}{Accuracy $\uparrow$}                  & \multicolumn{1}{c|}{PPL $\downarrow$} & \multicolumn{6}{c|}{Accuracy $\uparrow$} \\
          &       & Bits $\downarrow$   & Wiki  & AC    & AE    & PQ    & WI    & MU    & HS    & Wiki  & AC    & AE    & PQ    & WI    & MU    & HS \\
    \hline
    \multirow{6}[0]{*}{\makecell{\textsc{DeepSeek}-\\\textsc{MoE-16b}}} & FP16  & -     & 6.11  & 45.1  & 75.9  & 78.6  & 70.5  & 45.1  & 59.7  & 6.11  & 45.1  & 75.9  & 78.6  & 70.5  & 45.1  & 59.7  \\
    \cdashline{2-17}
          & \textsc{GptQ}  & 0.13$_{\text{0.0}}$ & 24.1  & 24.4  & 50.2  & 66.5  & 53.3  & 25.3  & 45.3  & 7.17  & 32.3  & 66.7  & 76.1  & 67.9  & 37.3  & 53.7  \\
          & \textsc{GptvQ} & 0.13$_{\text{0.0}}$ & 7.32  & 35.5  & 70.1  & 75.2  & 66.6  & 35.3  & 51.1  & 6.53  & 37.2  & 70.2  & 77.1  & 68.6  & 42.5  & 56.9  \\
          & \textsc{MoeQ} & 0.0$_{\text{0}}$  & 1e3  & 24.9  & 30.2  & 35.3  & 32.1  & 26.2  & 28.2  & 7.78  & 37.6  & 69.2  & 74.3  & 62.1  & 41.2  & 45.8  \\
          & \textsc{LoPRo} & 0.43$_{\text{0.31}}$  & 7.03  & 38.3  & 72.5  & 75.8  & 68.4  & 38.2  & 52.5  & \textbf{6.19}  & 42.6  & 74.8  & 78.7  & 69.8  & \textbf{45.1}  & \textbf{58.8}  \\
          & \cellcolor{blue!10}\textsc{TileQ$_s$} & \cellcolor{blue!10}\textbf{0.16$_{\text{\textcolor{darkgreen1}{0.04}}}$}  & \cellcolor{blue!10}7.06  & \cellcolor{blue!10}38.1  & \cellcolor{blue!10}72.7  & \cellcolor{blue!10}76.1  & \cellcolor{blue!10}68.5  & \cellcolor{blue!10}37.9  & \cellcolor{blue!10}52.2  & \cellcolor{blue!10}6.24  & \cellcolor{blue!10}\textbf{43.2}  & \cellcolor{blue!10}\textbf{75.0}  & \cellcolor{blue!10}78.6  & \cellcolor{blue!10}69.9  & \cellcolor{blue!10}44.3  & \cellcolor{blue!10}58.3  \\
          & \cellcolor{blue!10}\textsc{TileQ$_v$} & \cellcolor{blue!10}\textbf{0.16$_{\text{\textcolor{darkgreen1}{0.04}}}$}  & \cellcolor{blue!10}\textbf{6.71}  & \cellcolor{blue!10}\textbf{40.2}  & \cellcolor{blue!10}\textbf{73.3}  & \cellcolor{blue!10}\textbf{76.2}  & \cellcolor{blue!10}\textbf{68.8}  & \cellcolor{blue!10}\textbf{39.1}  & \cellcolor{blue!10}\textbf{53.0}  & \cellcolor{blue!10}{6.23}  & \cellcolor{blue!10}\textbf{43.2}  & \cellcolor{blue!10}74.9  & \cellcolor{blue!10}\textbf{78.8}  & \cellcolor{blue!10}\textbf{70.3}  & \cellcolor{blue!10}{44.4}  & \cellcolor{blue!10}\textbf{58.6}  \\
    \hline
    \multirow{6}[0]{*}{\makecell{\textsc{Mixtral}\\ \textsc{-8x7b}}} & FP16  & -     & 3.87  & 61.9  & 87.3  & 83.7  & 77.1  & 71.2  & 67.3  & 3.87  & 61.9  & 87.3  & 83.7  & 77.1  & 71.2  & 67.3  \\
    \cdashline{2-17}
          & \textsc{GptQ}  & 0.13$_{\text{0.0}}$ & 15.3  & 26.5  & 35.6  & 53.0  & 49.3  & 24.3  & 28.2  & 4.72  & 52.4  & 69.3  & 79.1  & 74.4  & 58.4  & 44.8  \\
          & \textsc{GptvQ} & 0.13$_{\text{0.0}}$ & 5.28  & 42.0  & 71.6  & 75.9  & 66.5  & 58.9  & 55.4  & 4.27  & 54.9  & 72.9  & 82.6  & 74.8  & 65.9  & 63.1  \\
          & \textsc{MoeQ} & 0.0$_{\text{0}}$  & 13.4  & 38.9  & 49.8  & 60.3  & 49.9  & 44.2  & 40.5  & 5.45  & 57.4  & 80.2  & 78.9  & 71.9  & 63.4  & 60.1  \\
          & \textsc{LoPRo} & 0.21$_{\text{0.08}}$  & 5.01  & 55.3  & 82.5  & 80.6  & 74.9  & 63.7  & 60.2  & 4.13  & 60.3  & 85.9  & 82.7  & 76.2  & 69.2  & 66.2  \\
          & \cellcolor{blue!10}\textsc{TileQ$_s$} & \cellcolor{blue!10}\textbf{0.16$_{0.03}$}  & \cellcolor{blue!10}4.98  & \cellcolor{blue!10}55.5  & \cellcolor{blue!10}82.8  & \cellcolor{blue!10}\textbf{80.9}  & \cellcolor{blue!10}\textbf{75.1}  & \cellcolor{blue!10}63.8  & \cellcolor{blue!10}60.3  & \cellcolor{blue!10}\textbf{4.10}   & \cellcolor{blue!10}60.4  & \cellcolor{blue!10}\textbf{86.1}  & \cellcolor{blue!10}\textbf{82.9}  & \cellcolor{blue!10}76.4  & \cellcolor{blue!10}\textbf{69.4}  & \cellcolor{blue!10}66.4  \\
          & \cellcolor{blue!10}\textsc{TileQ$_v$} & \cellcolor{blue!10}\textbf{0.16$_{0.03}$}  & \cellcolor{blue!10}\textbf{4.78}  & \cellcolor{blue!10}\textbf{56.3}  & \cellcolor{blue!10}\textbf{83.8}  & \cellcolor{blue!10}{80.5}  & \cellcolor{blue!10}{74.8}  & \cellcolor{blue!10}\textbf{64.4}  & \cellcolor{blue!10}\textbf{61.4}  & \cellcolor{blue!10}{4.13}  & \cellcolor{blue!10}\textbf{61.4}  & \cellcolor{blue!10}\textbf{86.1}  & \cellcolor{blue!10}{82.8}  & \cellcolor{blue!10}\textbf{77.2}  & \cellcolor{blue!10}{69.2}  & \cellcolor{blue!10}\textbf{66.6}  \\
    \hline
    \multirow{6}[0]{*}{\makecell{\textsc{Qwen1.5-}\\ \textsc{MoE-A2.7b}}} & FP16  & -     & 6.79  & 41.6  & 72.9  & 79.6  & 69.1  & 61.2  & 59.3  & 6.79  & 41.6  & 72.9  & 79.6  & 69.1  & 61.2  & 59.3  \\
    \cdashline{2-17}
          & \textsc{GptQ}  & 0.13$_{\text{0.0}}$ & 12.5  & 29.4  & 42.7  & 50.2  & 53.2  & 25.2  & 30.1  & 7.98  & 33.4  & 64.3  & 77.1  & 64.5  & 51.5  & 52.2  \\
          & \textsc{GptvQ} & 0.13$_{\text{0.0}}$ & 8.12  & 34.1  & 68.4  & 71.4  & 62.5  & 53.9  & 49.8  & 7.1   & 38.8  & 71.2  & 78.4  & 68.3  & 59.6  & 57.7  \\
          & \textsc{MoeQ} & 0.0$_{\text{0}}$  & 5e5  & 34.9  & 34.8  & 59.0  & 58.2  & 48.4  & 38.5  & 8.21  & 32.6  & 63.6  & 76.2  & 63.8  & 50.1 & 50.4  \\
          & \textsc{LoPRo} & 0.43$_{\text{0.31}}$  & 7.52  & 39.9  & 72.7  & 77.6  & 68.2  & 56.8  & 53.4  & 6.94  & 40.1  & 72.9  & 78.8  & \textbf{69.4}  & 60.8  & 57.6  \\
          & \cellcolor{blue!10}\textsc{TileQ$_s$} & \cellcolor{blue!10}\textbf{0.16$_{\text{\textcolor{darkgreen1}{0.04}}}$}  & \cellcolor{blue!10}7.56  & \cellcolor{blue!10}39.6  & \cellcolor{blue!10}72.5  & \cellcolor{blue!10}77.8  & \cellcolor{blue!10}\textbf{68.9}  & \cellcolor{blue!10}57.5  & \cellcolor{blue!10}54.1  & \cellcolor{blue!10}6.94  & \cellcolor{blue!10}\textbf{40.3}  & \cellcolor{blue!10}72.9  & \cellcolor{blue!10}79.1  & \cellcolor{blue!10}\textbf{69.4}  & \cellcolor{blue!10}\textbf{61.0}  & \cellcolor{blue!10}58.0  \\
          & \cellcolor{blue!10}\textsc{TileQ$_v$} & \cellcolor{blue!10}\textbf{0.16$_{\text{\textcolor{darkgreen1}{0.04}}}$}  & \cellcolor{blue!10}\textbf{7.35}  & \cellcolor{blue!10}\textbf{40.2}  & \cellcolor{blue!10}\textbf{73.4}  & \cellcolor{blue!10}\textbf{78.5}  & \cellcolor{blue!10}{68.6}  & \cellcolor{blue!10}\textbf{58.8}  & \cellcolor{blue!10}\textbf{55.5}  & \cellcolor{blue!10}\textbf{6.93}  & \cellcolor{blue!10}{40.1}  & \cellcolor{blue!10}\textbf{74.1}  & \cellcolor{blue!10}\textbf{80.0}  & \cellcolor{blue!10}{68.6}  & \cellcolor{blue!10}{60.3}  & \cellcolor{blue!10}\textbf{58.3}  \\
    \hline
    \multirow{6}[0]{*}{\makecell{\textsc{Qwen3-}\\ \textsc{30B-A3b}}} & FP16  & -     & 8.07  & 52.6  & 79.2  & 79.7  & 70.3  & 79.6  & 58.8  & 8.07  & 52.6  & 79.2  & 79.7  & 70.3  & 79.6  & 58.8  \\
    \cdashline{2-17}
          & \textsc{GptQ}  & 0.13$_{\text{0.0}}$ & 14.6  & 31.1  & 54.4  & 68.9  & 57.2  & 53.1  & 43.2  & 9.42  & 44.0  & 70.3  & 75.1  & 63.4  & 72.2  & 50.1  \\
          & \textsc{GptvQ} & 0.13$_{\text{0.0}}$ & 11.8  & 34.1  & 58.5  & 70.7  & 61.2  & 61.5  & 47.9  & 8.7   & 47.9  & 73.1  & 76.5  & 66.2  & 76.1  & 54.1  \\
          & \textsc{MoeQ} & 0.0$_{\text{0}}$  & 3e4  & 26.6  & 29.8  & 50.5  & 49.8  & 41.4  & 28.5  & 28.1  & 37.8  & 30.6  & 59.4  & 55.3  & 60.1 & 43.2  \\
          & \textsc{LoPRo} & 0.58$_{\text{0.46}}$  & 11.1  & 34.4  & 58.3  & 71.5  & 62.9  & 62.9  & 48.0  & 8.57  & 50.2  & 75.4  & \textbf{77.4}  & 68.7  & 77.7  & 56.1  \\
          & \cellcolor{blue!10}\textsc{TileQ$_s$} & \cellcolor{blue!10}\textbf{0.16$_{\text{\textcolor{darkgreen1}{0.04}}}$}  & \cellcolor{blue!10}11.3  & \cellcolor{blue!10}34.6  & \cellcolor{blue!10}58.4  & \cellcolor{blue!10}71.8  & \cellcolor{blue!10}63.1  & \cellcolor{blue!10}62.9  & \cellcolor{blue!10}48.3  & \cellcolor{blue!10}8.58  & \cellcolor{blue!10}50.3  & \cellcolor{blue!10}76.3  & \cellcolor{blue!10}77.3  & \cellcolor{blue!10}\textbf{68.9}  & \cellcolor{blue!10}\textbf{77.9}  & \cellcolor{blue!10}\textbf{56.9}  \\
          & \cellcolor{blue!10}\textsc{TileQ$_v$} & \cellcolor{blue!10}\textbf{0.16$_{\text{\textcolor{darkgreen1}{0.04}}}$}  & \cellcolor{blue!10}\textbf{10.1}  & \cellcolor{blue!10}\textbf{42.2}  & \cellcolor{blue!10}\textbf{70.4}  & \cellcolor{blue!10}\textbf{74.1}  & \cellcolor{blue!10}\textbf{65.7}  & \cellcolor{blue!10}\textbf{71.3}  & \cellcolor{blue!10}\textbf{50.8}  & \cellcolor{blue!10}\textbf{8.64}  & \cellcolor{blue!10}\textbf{50.8}  & \cellcolor{blue!10}\textbf{76.6}  & \cellcolor{blue!10}{77.0}  & \cellcolor{blue!10}{68.2}  & \cellcolor{blue!10}\textbf{77.3}  & \cellcolor{blue!10}{56.8} \\
    \Xhline{4\arrayrulewidth}
    \end{tabular}%
  \label{tab_MAIN}%
\end{table*}%

\section{Evaluations}
\label{sec_exp}

% Table generated by Excel2LaTeX from sheet 'Sheet1'

\subsection{Setups}

\textbf{Models:} Evaluations are carried out on a set of representative Mixture-of-Experts language models, including Qwen1.5-MoE-A2.7B~\cite{qwen15_moe}, Qwen3-30B-A3B, Qwen3-Next-80B-A3B ~\cite{qwen3technicalreport}, Mixtral-8×7B \cite{jiang2024mixtral}, and Deepseek-MoE-16B \cite{dai2024deepseekmoe}. Architectural details of each model are summarized in Table~\ref{tab:addlabel}.

\textbf{Baselines:}  We compare against \textsc{gptq} (standard per-channel PTQ) \cite{qubitium2024gptqmodel}, \textsc{gptvq} (vector quantization variant) \cite{van2024gptvq}, \textsc{LoPRo} (low-rank rotation) \cite{gu2026loproenhancinglowrankquantization} and \textsc{MoEQuant}~\cite{chen2025moequant} (MoE quantization) in main evaluations with the same protocol; FP16 results serve as upper bounds.

\textbf{Metrics:} 
\label{sec_zs_metrics}
We evaluate model performance by multiple benchmarks: perplexity is measured on WikiText-2 \cite{merity2016pointer}, while down-stream tasks is assessed on: the AI2 Reasoning Challenge \cite{boratko2018systematic}, Physical Interaction Question Answering \cite{Bisk_Zellers_Le_bras_Gao_Choi_2020}, Winogrande (WI) \cite{sakaguchi2021winogrande}, and the Massive Multitask Language Understanding benchmark \cite{hendrycks2020measuring}. All evaluations are conducted by \texttt{lm-eval} framework \cite{evalharness}. Other experimental details are provided in Appendix~\ref{appendix_ad_implement}.

\begin{table}[H]
  \centering
  \small
  \caption{Architectural configurations of experts in MoE models, with the number of experts reported as ``regular + shared''.}
    \begin{tabular}{cccc}
    \Xhline{4\arrayrulewidth}
    MoE-Models & Experts & Top-K & Dimension \\
    \hline
    Qwen1.5-MoE-A2.7B & 60+4  & 4     & [1408,2048] \\
    Qwen3-30B-A3B & 128+0   & 8     & [768,2048] \\
    Qwen3-Next-80B-A3B & 512+1 & 10    & [512,2048] \\
    Mixtral-8×7B & 8+0     & 2     & [14336,4096] \\
    Deepseek-MoE-16B & 64+2  & 6     & [1408,2048] \\
    \Xhline{4\arrayrulewidth}
    \end{tabular}%
  \label{tab:addlabel}%
\end{table}%

\subsection{Main Results}
We compare \textsc{TileQ} with baselines on quantization performance, results are shown in Table \ref{tab_MAIN} and can be analyzed from three directions. The comparison with other 2 MoE-Specific PTQ methods (\textsc{MiLo} \cite{huangmilo} and \textsc{MxMoE} \cite{duanmu2025mxmoe}) are provided in Appendix \ref{appendix_moe_ptqs}.

\textbf{Across bit-widths:}  
\textsc{TileQ} demonstrates exceptional robustness under extreme low-bit settings, consistently outperforming all no-finetuning baselines—including \textsc{gptq} and \textsc{gptvq}. In the highly challenging 2-bit scenario, most methods suffer severe degradation (e.g., \textsc{gptq} on Mixtral-8x7B PPL drops from 3.87 to 15.3), whereas \textsc{TileQ}$_v$ achieves a perplexity of 4.78; Under 3-bit quantization, performance further approaches FP16: \textsc{TileQ}$_s$ attains a perplexity of 4.10 on Mixtral—nearly matching FP16 (3.87)—and scores 69.4 on MMLU, comparable to the unquantized model. This consistent cross-bit-width advantage stems from \textsc{TileQ}'s hybrid multi-layer quantization strategy. Moreover, compared to the element-wise method LoPRo, \textsc{TileQ} achieves comparable or even higher accuracy, indicating that the fused expert features in MoE is effective.

\begin{figure*}[t]
\centering
\includegraphics[width=1\linewidth]{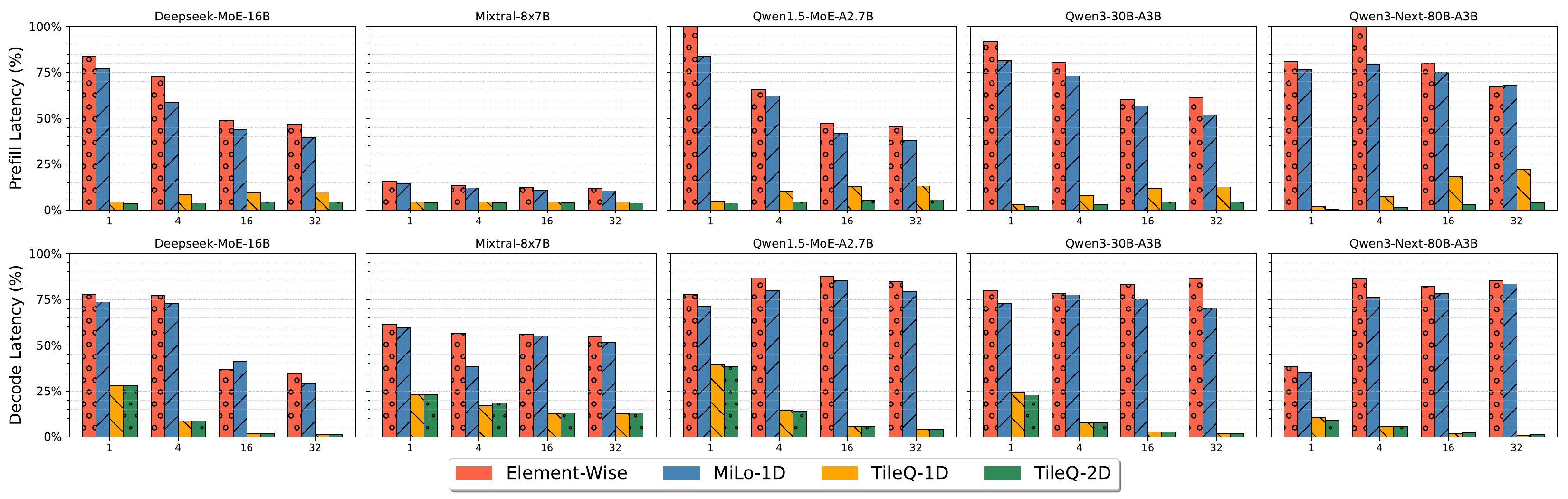}
\caption{Inference latency in MoE MLP-block on A800. Here \textsc{TileQ}-2D denotes our proposed 2D-tiling low-rank algorithm. The \textit{Element-wise} baseline refers to the traditional per expert low-rank approach. The results of H800 and 5090 are given in Appendix \ref{appendix_sec_inference_efficiency}.}
\label{fig_tileq_latency_a800}
\vspace{-1.5em}
\end{figure*}

\textbf{Quantization Strategies:}  
\textsc{TileQ} offers both scalar (\textsc{TileQ}$_s$) and vector (\textsc{TileQ}$_v$) variants, revealing a fine-grained trade-off between accuracy and practical efficiency. At 2-bit, \textsc{TileQ}$_v$ consistently achieves the best performance. This gain arises from its multidimensional codebook modeling intra-tile correlations. However, at 3-bit, the benefit of vector quantization diminishes: \textsc{TileQ}$_s$ matches or slightly surpasses \textsc{TileQ}$_v$ on key metrics, this stems from reduced reliance on complex codebooks at higher precision. Notably, both variants share the same low-rank method backbone—indicating that the core innovation lies not in the quantizer itself, but in the 2D-Tiling mechanism. This modular design allows users to flexibly choose: \textsc{TileQ}$_s$ for \textbf{hardware-friendly, low-latency deployment}, and \textsc{TileQ}$_v$ for ultra-low-bit applications demanding \textbf{peak accuracy}.

\textbf{Generalization and Scalability:}  
\textsc{TileQ} excels across diverse MoE models with varying sparsity and scale. On the large-expert model Mixtral-8$\times$7B, \textsc{TileQ}$_v$ effectively reducing it to 4.78 and significantly boosting downstream task performance. On sparse models like Qwen3-MoE, \textsc{TileQ} still works well, achieving 71.3 MMLU at 2-bit. Notably, its lightweight low-rank (only \textbf{0.04} bit) attains near-optimal accuracy, yielding an \textbf{8}$\times$ reduction in memory overhead compared to per-expert (\textbf{0.04 vs.\ 0.32}). In summary, regardless of whether experts are ``\textbf{big-and-dense}'' or ``\textbf{small-and-sparse}'', \textsc{TileQ} adaptively maintains high accuracy. This scalability positions \textsc{TileQ} as a universal compression framework for next-generation sparse MoE models.

\subsection{Inference latency}
We evaluate inference latency in MoE block on 3 GPU architectures and A800 results are shown in Figure \ref{fig_tileq_latency_a800} and the proportion of each module in Qwen1.5-MoE is shown in Figure \ref{fig_propotion}. Other results refer to Appendix \ref{appendix_sec_inference_efficiency}.

\textbf{Prefill.}  
During prefill, all tokens are processed simultaneously, leading to large, dense activation matrices that favor compute-bound, regular operations. \textsc{TileQ} achieves minimal latency (\textbf{less than 5\%}) by replacing irregular, per-expert sparse dispatch with a single structured GEMM. This trend holds consistently across model scales and batch sizes, confirming that \textsc{TileQ}’s 2D tiling maximizes hardware utilization during bulk processing.

\textbf{Decode.}  
Only few experts are active in this step, making memory bandwidth to be the primary bottleneck. Despite the reduced computational intensity, \textsc{TileQ} remains a static, contiguous weight layout that enables efficient caching and vectorized access. Meanwhile, element-wise and Milo-1D are still slow (around 75\% latency on Qwen3-MoE), underscoring its unsuitability for inference.

Thus, since the MoE MLP blocks account for about 50\% of the total inference time in practice (Figure \ref{fig_propotion}), the end-to-end latency of \textsc{TileQ} is even lower in end-to-end deployment.
% Table generated by Excel2LaTeX from sheet 'Ablation'
\begin{figure}[t]
\centering
\includegraphics[width=1\linewidth]{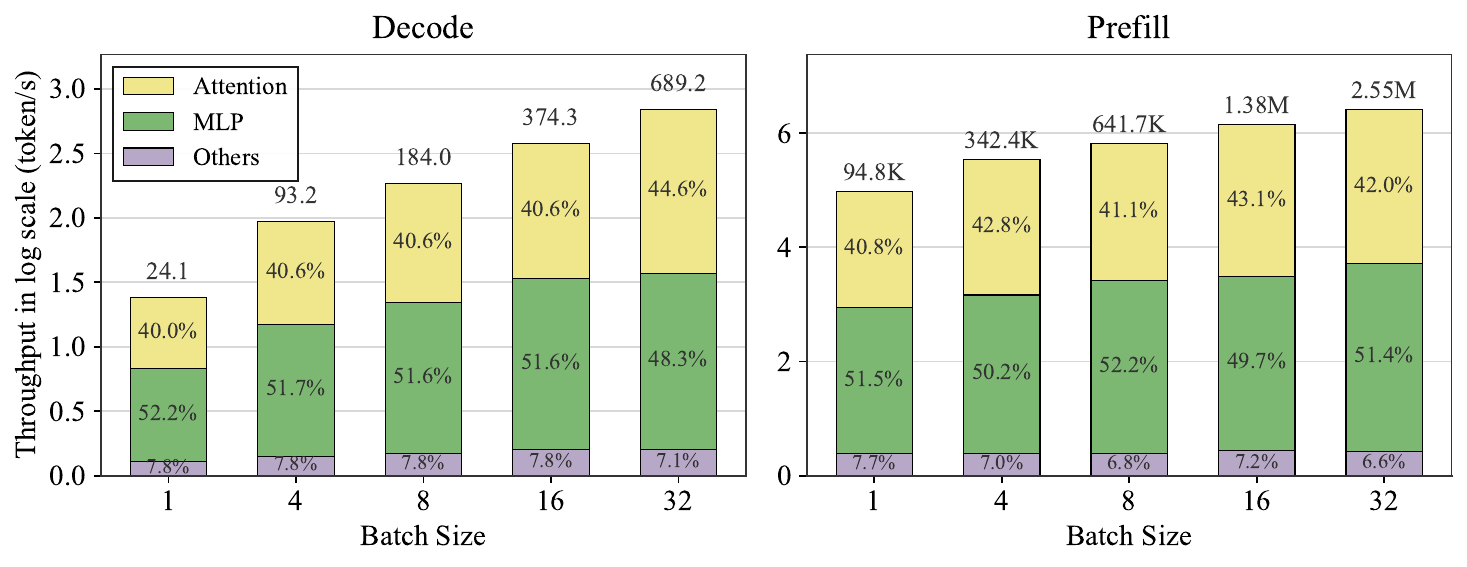}
\caption{Inference throughput and the time proportion of each module in Qwen1.5-MoE-A2.7B with sequence length 4096. }
\label{fig_propotion}
\vspace{-1.5em}
\end{figure}

\begin{table}[h]
   \setlength{\tabcolsep}{4pt}
    \renewcommand{\arraystretch}{1.2}
  \small
  \centering
  \caption{Ablation studies on each component of \textsc{TileQ} in \textsc{Qwen1.5-MoE-A2.7b} under 2-bit quantization.}
    \begin{tabular}{|l|c|cc|cc|}

    \Xhline{3\arrayrulewidth}
    \multirow{2}[0]{*}{Ablation On} & \multirow{2}[0]{*}{\makecell{\textsc{lora}.\\\textsc{Bits}}} & \multicolumn{2}{c|}{2bit} & \multicolumn{2}{c|}{3bit} \\
    
          &       & PPL$\downarrow$   & ACC$\uparrow$ & PPL$\downarrow$   & ACC$\uparrow$ \\
    \hline
    \textsc{TileQ}$_v$ & \textcolor{darkgreen2}{0.04}  & \textbf{7.35}  & \textbf{62.5}  & \textbf{6.93}  & \textbf{63.6}  \\
    $\downrightarrowTwo$Vector $\mathcal{Q}$ & \textcolor{darkgreen2}{0.04}  & 7.56  & 61.6  & 6.94  & 63.5  \\
    $\ $ $\downrightarrowTwo$Rotation   & \textcolor{darkgreen2}{0.04}  & 7.6   & 61.8  & 6.97  & 63.5  \\
    $\ \ $ $\downrightarrowTwo$2D-Tiling & \textcolor{darkred1}{0.31}  & 7.49  & 61.6  & 6.94  & 63.5  \\
    $\ \ \ $ $\downrightarrowTwo$LoRa  & 0.00     & \textcolor{darkred1}{12.5}  & \textcolor{darkred1}{38.5}  & \textcolor{darkred1}{7.98}  & \textcolor{darkred1}{57.2}  \\
    \Xhline{3\arrayrulewidth}
    \end{tabular}%
  \label{table_ablation_methods}%
\end{table}%
\subsection{Ablation Studies}
To validate each component's contribution in \textsc{TileQ}, we perform an ablation study, with results shown in Table~\ref{table_ablation_methods}. 

\textsc{TileQ}$_v$, incorporating 2D-Tiling, vector-wise quantization, and rotation, outperforms all variants at 2-bit, highlighting the synergy among these components. Replacing vector-wise with scalar quantization, or rotation, causes minor degradation (PPL to 7.56–7.60), suggesting limited benefits of higher-order optimizations at low bit-widths. However, at 3-bit, all configurations perform similarly, indicating higher bit-widths better suit advanced quantization strategies. Then, removing 2D-Tiling maintains accuracy but increases bit-width by \textbf{8×}. Conversely, eliminating the low-rank mechanism severely degrades performance—PPL rises to 12.5 and ACC drops to 38.5 at 2-bit. This confirms that leveraging expert similarity via 2D-Tiling with low-rank modeling is key for efficient compression, significantly enhancing accuracy without additional storage costs. Further ablation studies on rank and tile size are in Appendix \ref{appendix_ablations}.

\subsection{Quantization Time}
% Table generated by Excel2LaTeX from sheet 'QuantCosts'
\begin{wraptable}{r}{4.5cm}
  \centering
   \setlength{\tabcolsep}{3.2pt}
    \renewcommand{\arraystretch}{1.2}
  \small
  % \centering
    % \vspace{-1.2em}
  \caption{The time costs (hours) of each part in \textsc{TileQ}}
    \begin{tabular}{cccc}
    \Xhline{3\arrayrulewidth}
    Models & Total & Tiling  & Quant \\
    \hline
    DS$_{16B}$ & 1.60  & 0.10  & 1.50  \\
    M8$_{7B}$ & 2.22  & 0.22  & 2.00  \\
    Q1.5$_{13B}$ & 1.52  & 0.12  & 1.40  \\
    Q3$_{30B}$ & 5.70  & 0.20  & 5.50  \\
    \Xhline{3\arrayrulewidth}
    \end{tabular}%
  \label{table_time_costs}%
\end{wraptable}
As shown in Table~\ref{table_time_costs} and the model names are in short of that in Table~\ref{tab_MAIN}, the runtime of \textsc{TileQ} is overwhelmingly dominated by the quantization phase, which accounts for more than 90\% of the total time across all models. In contrast, the 2D-Tiling step is highly efficient by fast low-rank sketching, demonstrating its computational lightness and scalability. Notably, on sparse architectures such as Qwen3-30B-A3b, the quantization stage remains a bottleneck due to the inefficiency of GPTQ when applied to sparse experts and remains for future optimization.

\section{Conclusion}
In this work, we have presented \textsc{TileQ}, an efficient low-rank PTQ method for MoE models. By introducing a novel 2D tiling strategy, TileQ significantly reduces the storage overhead of low-rank matrices while preserving model accuracy. Moreover, the structured computation pattern enabled by tiling facilitates highly parallel inference, leading to substantial reductions in latency compared to traditional MoE low-rank computation mechanisms. Overall, \textsc{TileQ} serves as a \textbf{lightweight, general-purpose} extension to existing PTQ methods (e.g., \textsc{gptq}), achieving \textbf{SOTA performance} with \textbf{minimal memory overhead} and \textbf{negligible inference latency}, providing a \textbf{scalable} and \textbf{hardware-friendly} solution towards deploying MoE models under memory and computational constraints.

\clearpage

\section*{Impact Statements}
This paper presents work whose goal is to advance the field of machine learning. There are many potential societal consequences of our work, none of which we feel must be specifically highlighted here.

\bibliography{example_paper}
\bibliographystyle{icml2026}

%%%%%%%%%%%%%%%%%%%%%%%%%%%%%%%%%%%%%%%%%%%%%%%%%%%%%%%%%%%%%%%%%%%%%%%%%%%%%%%
%%%%%%%%%%%%%%%%%%%%%%%%%%%%%%%%%%%%%%%%%%%%%%%%%%%%%%%%%%%%%%%%%%%%%%%%%%%%%%%
% APPENDIX
%%%%%%%%%%%%%%%%%%%%%%%%%%%%%%%%%%%%%%%%%%%%%%%%%%%%%%%%%%%%%%%%%%%%%%%%%%%%%%%
%%%%%%%%%%%%%%%%%%%%%%%%%%%%%%%%%%%%%%%%%%%%%%%%%%%%%%%%%%%%%%%%%%%%%%%%%%%%%%%
\newpage
\appendix
\onecolumn

\section*{Appendix Contents}
\textbf{Organization:} In this appendix, we provide further details as follows:
\addcontentsline{toc}{section}{Appendix Contents}

\begin{itemize}
    \item \ref{appendix_detail_tileq}~~Detail of \textsc{TileQ}
    \begin{itemize}
        \item \ref{appendix_code_tileq}~~Codes of \textsc{TileQ}
        \item \ref{appendix_error_bound_ec}~~Error bounds of \textsc{TileQ}

        \item \ref{appendix_compress_ratio}~~Compression Ratio Analysis

        \item \ref{appendix_complexity}~~Time Complexity

    \end{itemize}
    
    \item \ref{appendix_ad_implement}~~Evaluation Details
    \begin{itemize}
        \item ~~Hardware Configuration
        \item ~~Calibration Strategy
        \item ~~Experimental Settings
        \item ~~Hyperparameter selection
        \item ~~Quantization Implementation
        \item ~~Baseline Methods
    \end{itemize}
    
    \item \ref{appendix_ablation_study}~~More Ablation Studies
    \begin{itemize}
        \item \ref{appendix_ablation_rank}~~Ablation on Rank
        \item \ref{appendix_ablation_tile}~~Ablation on Tile
    \end{itemize}
    
    \item \ref{appendix_evaluations}~~Appendix Evaluation Results
    \begin{itemize}
        \item \ref{appendix_sec_inference_efficiency}~~Inference on other GPU Architecture
        \item \ref{appendix_moe_ptqs}~~Comparison with Other MoE Methods
        \begin{itemize}
            \item ~~Compare with \textsc{MxMoE}
            \item ~~Compare with \textsc{MiLo}
            \item ~~Summary
        \end{itemize}
    \end{itemize}
\end{itemize}

\begin{table}[H]
    \small
  \centering
  \caption{Symbols and Description in this paper.}
\bgroup
\def\arraystretch{1.5}
\begin{tabular}{cp{3.5in}}
    \Xhline{4\arrayrulewidth}
    Symbols & Description\\
    \hline
    \multicolumn{2}{c}{\textbf{Scalars}} \\
    \hline
    \rowcolor{black!10} $K$ & Number of experts in each MoE module.\\
    $i, \ o$ & The dimensions of input and output. \\
    \rowcolor{black!10} $r$ & Rank of the low-rank approximation. \\
    $\mathcal{K}$ & Number of activated experts. \\
    \rowcolor{black!10} $r_0$ & Rank used in SVD of scaled weight (implied from $\boldsymbol U_k, \boldsymbol S_k, \boldsymbol V_k$). \\
    $M, N$ & Number of row/column tiles (implied from $\boldsymbol U, \boldsymbol V$ definitions). \\
    \rowcolor{black!10} $b, t$ & Token batch index and expert rank index (from $g_{b,t}$). \\

    \hline
    \multicolumn{2}{c}{\textbf{Vectors}} \\
    \hline
    \rowcolor{black!10} $u_k, v_k$ & Normalized vectorized $\boldsymbol U_k$ and $\boldsymbol V_k$ for clustering (Eq.~\ref{eq_vec_uv}).\\
    $S \in \mathbb{R}^i$ & Gaussian sketch vector used in fast low-rank approximation. \\

    \hline
    \multicolumn{2}{c}{\textbf{Matrices}} \\
    \hline
    \rowcolor{black!10} $\mathcal{W} = \{\boldsymbol W_k \in \mathbb{R}^{o \times i} \}_{k=1}^K$ & Set of expert weight matrices, each of size output $\times$ input.\\
    $\boldsymbol{X} \in \mathbb{R}^{B \times i}$ & Input batch tensor with $B$ tokens and $i$ input features.\\
    \rowcolor{black!10} $\boldsymbol W_k$ & Weight matrix of the $k$-th expert, $\boldsymbol W_k \in \mathbb{R}^{o \times i}$.\\
    $\boldsymbol U = [\boldsymbol U_1, \dots, \boldsymbol U_M]^\top \in \mathbb{R}^{(Mi) \times r}$ & Stacked shared left low-rank factor across row tiles.\\
    \rowcolor{black!10} $\boldsymbol V = [\boldsymbol V_1, \dots, \boldsymbol V_N] \in \mathbb{R}^{r \times (No)}$ & Stacked shared right low-rank factor across column tiles.\\
    ${\boldsymbol R}$ & Residual matrix for expert $k$, i.e., ${\boldsymbol R} = \boldsymbol W_k - \tilde{\boldsymbol W}_k$.\\
    \rowcolor{black!10} $\hat{\boldsymbol W}_k$ & Quantized residual matrix for expert $k$, i.e., $\hat{\boldsymbol W}_k = \text{Quant}(\boldsymbol R)$.\\
    $s_k$ & Scaling diagonal matrix from calibration.\\
    \rowcolor{black!10} $\tilde{\boldsymbol W}_k^{scale} = \boldsymbol W_k \cdot s_k$ & Scaled weight matrix of expert $k$ (Eq.~\ref{eq_scaling_weight}).\\
    $\boldsymbol U_k, \boldsymbol S_k, \boldsymbol V_k$ & Rank-$r_0$ SVD factors of $\tilde{\boldsymbol W}_k$ (Eq.~\ref{eq_svd}).\\
    \rowcolor{black!10} $\tilde{\boldsymbol W}_k = {\boldsymbol U}_{m_k} \boldsymbol S {\boldsymbol V}_{n_k}^\top s_k^{-1}$ & Low-rank matrix of $\tilde{\boldsymbol W}_k^{scale}$ (Eq.~\ref{eq_scaling_weight}).\\
    $\mathcal{E}_{m,n}$ & Basis matrix for routing low-rank construction with 1 at $(m,n)$.\\
    \rowcolor{black!10} $\boldsymbol W_{\text{big}}$ & Large tiled matrix formed by placing $\tilde{\boldsymbol W}_k$ into a 2D grid (Eq.~\ref{eq_block_mosaic}).\\
    $\boldsymbol{X}_{\text{proj}}$ & Projected input via shared left factor $\boldsymbol U \boldsymbol S$.\\
    \rowcolor{black!10} $\boldsymbol{S} \in \mathbb{R}^{b \times (N r)}$ & Compact accumulator buffer avoiding 3D tensor (Eq.~\ref{eq_avoid3d}).\\
    $\boldsymbol Q$ & Intermediate matrix in sketch low-rank approximation (Eq.~\ref{eq_Lowrank_ALAR}).\\

    \hline
    \multicolumn{2}{c}{\textbf{Functions}} \\
    \hline
    \rowcolor{black!10} $\boldsymbol{G}(\boldsymbol{X})$ & Router function that outputs routing scores for experts.\\
    $\mathrm{MoE}(\boldsymbol{X})$ & Output of the Mixture-of-Experts layer (Eq.~\ref{eq_moe_forward}).\\
    \rowcolor{black!10} $e_{k,\boldsymbol{X}}$ & Set of top-$\mathcal{K}$ experts selected by the router for input $\boldsymbol{X}$.\\
    $\phi_k \mapsto \{p, q\}$ & Mapping from expert index $k$ to position $(p,q)$ in the 2D tile.\\
    \rowcolor{black!10} $\mathcal{Q}(\cdot)$ & Generic quantization function applied to the residual.\\
    $\mathrm{QMoE}(\boldsymbol{X})$ & Standard quantized MoE forward pass using $\hat{\boldsymbol W}_k$.\\
    \rowcolor{black!10} $\mathrm{LoTileMoE}(\boldsymbol{X})$ & Efficient low-rank inference path using shared $\boldsymbol U, \boldsymbol V$ and tiling (Eq.~\ref{eq_LoTileMoE}).\\
    $\mathcal{G}_m^{\text{row}}, \mathcal{G}_n^{\text{col}}$ & Row and column expert clusters from biclustering (Eq.~\ref{eq_bicluster}).\\
    \rowcolor{black!10} $\phi(k) = (m, n)$ & Final placement function assigning expert $k$ to a unique tile (Eq.~\ref{eq_location_phi}).\\
    $g_{b,t}$ & Routing weight for token $b$ and its $t$-th selected expert.\\

    \Xhline{4\arrayrulewidth}
\end{tabular}
\egroup
\vspace{0.25cm}
\label{table_defs_classified}
\end{table}

\section{Detail of \textsc{TileQ}}
\label{appendix_detail_tileq}
In this appendix, we provide a comprehensive theoretical and empirical analysis of \textsc{TileQ} from three critical perspectives: compression ratio, quantization error, and inference latency. These analyses collectively justify the design choices in our method and explain its superiority over existing low-rank post-training quantization (PTQ) approaches for Mixture-of-Experts (MoE) models.

\subsection{Codes of \textsc{TileQ}}
\label{appendix_code_tileq}
\paragraph{Quantization processes} Algorithm~\ref{algo_Tileq} outlines process of MoE experts quantization in \textsc{TileQ}.

\noindent\makebox[\linewidth][c]{%
  \begin{minipage}{0.9\linewidth}
\begin{algorithm}[H]
\caption{TileQ: 2D-Tiling Structured Low-Rank Quantization for MoE Models}
\label{algo_Tileq}
\begin{algorithmic}[1]
\REQUIRE $\mathcal{W} = \{\boldsymbol W_k \in \mathbb{R}^{o \times i}\}_{k=1}^K$ (MoE expert weights), $Sample$ (calibration data), $d$ (quantization bit), $r$ (low-rank), $M, N$ (tiling grid size)
\ENSURE $QModule$ (quantized MoE module), $\{\boldsymbol U_m\}_{m=0}^{M-1}, \{\boldsymbol V_n\}_{n=0}^{N-1}, \boldsymbol \Sigma$ (shared low-rank factors)

\STATE Initialize $QModule \leftarrow \emptyset$
\FOR{each MoE layer in Module}
    \STATE Collect input activations over calibration set: $\boldsymbol X_k \leftarrow \text{forward}(Sample)$ for each expert $k$
    
    \COMMENT{\textcolor{blue}{Step 1: Scaling}}
    \FOR{$k = 1$ \TO $K$}
        \STATE Compute scaling vector: $s_k \leftarrow {\overline{\boldsymbol X}_k}^{p} / \sqrt{\max(\overline{\boldsymbol X}_k) \cdot \min(\overline{\boldsymbol X}_k)}$
        \STATE Scale weight: $\widetilde{\boldsymbol W}_k \leftarrow \boldsymbol W_k \cdot \text{diag}(s_k)$
    \ENDFOR

    \COMMENT{\textcolor{blue}{Step 2: Extracting low-rank features via sketching}}
    \STATE Initialize $\mathcal{U} = [], \mathcal{V} = []$
    \FOR{$k = 1$ \TO $K$}
        \STATE $\boldsymbol U_k, \boldsymbol V_k, \boldsymbol \Sigma_k \leftarrow \text{SketchSVD}(\widetilde{\boldsymbol W}_k^\top, r_0 = r/2)$ \COMMENT{using Gaussian sketch, Eq.~\ref{eq_Lowrank_ALAR}}
        \STATE Flatten and normalize: $u_k \leftarrow \text{vec}(\boldsymbol U_k) / \|\text{vec}(\boldsymbol U_k)\|$, $v_k \leftarrow \text{vec}(\boldsymbol V_k) / \|\text{vec}(\boldsymbol V_k)\|$
        \STATE Append to lists: $\mathcal{U}.append(u_k), \mathcal{V}.append(v_k)$
    \ENDFOR

    \COMMENT{\textcolor{blue}{Step 3: Biclustering for 2D tiling assignment}}
    \STATE Row clusters: $\{ \mathcal{G}_m^{\text{row}} \}_{m=0}^{M-1} \leftarrow \text{KMeans}(\mathcal{U}, n\_clusters = M)$
    \STATE Col clusters: $\{ \mathcal{G}_n^{\text{col}} \}_{n=0}^{N-1} \leftarrow \text{KMeans}(\mathcal{V}, n\_clusters = N)$
    \STATE Assign ideal tile: $(m_k^\star, n_k^\star) \leftarrow (\text{cluster\_id}_U(k), \text{cluster\_id}_V(k))$

    \COMMENT{\textcolor{blue}{Step 4: Greedy spatial placement to resolve collisions}}
    \STATE $\phi \leftarrow \text{GreedyPlacement}(\{(m_k^\star, n_k^\star)\}_{k=1}^K, M, N)$ \COMMENT{minimize $\ell_1$ distance, Eq.~\ref{eq_location_phi}}
    \STATE Construct tiled matrix: $\boldsymbol W_{\text{big}} \leftarrow \sum_{k=1}^K \mathcal{E}_{\phi(k)} \otimes \widetilde{\boldsymbol W}_k \in \mathbb{R}^{Mi \times No}$

    \COMMENT{\textcolor{blue}{Step 5: Global low-rank decomposition + residual quantization}}
    \STATE $\boldsymbol U_{\text{big}}, \boldsymbol \Sigma, \boldsymbol V_{\text{big}} \leftarrow \text{SketchSVD}(\boldsymbol W_{\text{big}}, r)$
    \FOR{$k = 1$ \TO $K$}
        \STATE $(m, n) \leftarrow \phi(k)$
        \STATE Extract shared factors: $\boldsymbol U_k \leftarrow \boldsymbol U_{\text{big}}[m i : (m+1)i, :]$, $\boldsymbol V_k \leftarrow \boldsymbol V_{\text{big}}[:, n o : (n+1)o]$
        \STATE Reconstruct low-rank part: $\widetilde{\boldsymbol W}_k^{\text{lr}} \leftarrow \boldsymbol U_k \boldsymbol \Sigma \boldsymbol V_k^\top \cdot \text{diag}(s_k)^{-1}$
        \STATE Compute residual: $\boldsymbol R_k \leftarrow \boldsymbol W_k - \widetilde{\boldsymbol W}_k^{\text{lr}}$
        
        \COMMENT{Quantize residual with Hessian-aware loss}
        \STATE $\boldsymbol H_k \leftarrow \mathbb{E}[\boldsymbol X_k \boldsymbol X_k^\top]$
        \STATE $\boldsymbol W_{k,q} \leftarrow \mathcal{Q}_{\text{GPTQ/GPTVQ}}(\boldsymbol R_k; \boldsymbol H_k, d)$
        \STATE Store result: $QModule.\text{expert}[k] \leftarrow (\boldsymbol W_{k,q}, \boldsymbol U_k, \boldsymbol V_k, \boldsymbol \Sigma, s_k, \phi(k))$
    \ENDFOR
\ENDFOR
\STATE \textbf{return} $QModule$
\end{algorithmic}
\end{algorithm}
  \end{minipage}%
}

\paragraph{Inference processes} Figure~\ref{algo_Tileq} shows the inference code of low-rank MoE layers in \textsc{TileQ}.

\begin{figure*}[!b]
\centering
\includegraphics[width=0.99\linewidth]{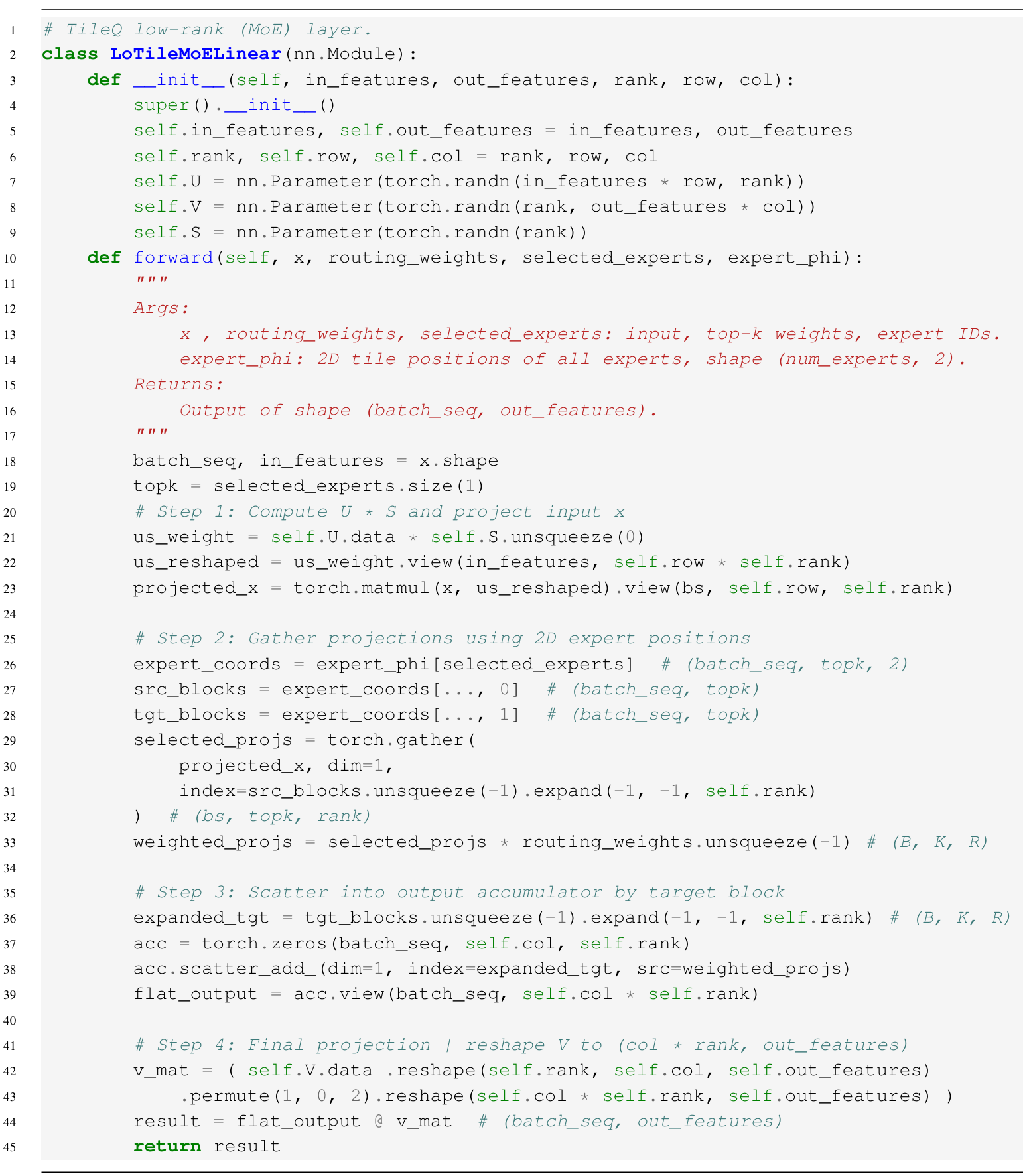}
\caption{Implementation of inference codes in \textsc{TileQ}.}
\label{fig_tileq_inference_code}
\vspace{-1.5em}
\end{figure*}

\subsection{Error bound}
\label{appendix_error_bound}

\begin{tcolorbox}[title=Algorithm Review]
TileQ seeks a shared low-rank factorization that jointly approximates all experts in a 2D tiled layout. These scaled experts are then embedded into a large block matrix $\boldsymbol{W}_{\mathrm{big}} \in \mathbb{R}^{Mi \times No}$ according to a placement mapping $\phi(k) = (p_k, q_k)$:
\[
[\boldsymbol{W}_{\mathrm{big}}]_{(p,q)} =
\begin{cases}
\tilde{\boldsymbol{W}}_k, & \text{if } \phi(k) = (p,q), \\
\boldsymbol{0}, & \text{otherwise},
\end{cases}
\quad
\]
The overall optimization objective of \textsc{TileQ} balances four key sources of approximation error:
\begin{enumerate}
    \item \textbf{Per-module reconstruction error}: $\sum_{k=1}^K \|\boldsymbol{W}_k - \tilde{\boldsymbol{W}}_k^{\mathrm{tile}}\|_F^2$ penalizes deviations between each original expert weight $\boldsymbol{W}_k$ and its reconstructed version from the tiled low-rank components. This term preserves individual expert functionality despite parameter sharing.
    
    \item \textbf{Structural consistency}: $\phi$ enforces spatial coherence in the 2D tiling layout—e.g., by discouraging large displacements from ideal cluster-assigned positions $(m_k, n_k)$. This regularizer maintains the semantic alignment between expert similarity and tile placement.
    
    \item \textbf{Global low-rank error}: $\|\boldsymbol{W}_{\mathrm{big}} - \boldsymbol{U} \boldsymbol{\Sigma} \boldsymbol{V}^\top\|_F^2$ measures the fidelity of the shared 2D-tiling low-rank approximation across all experts. Minimizing this term ensures that the structured decomposition captures the dominant subspace of the aggregated expert weights.
        
    \item \textbf{Quantization error}: $\sum_{k=1}^K \|\boldsymbol{R}_k - \mathcal{Q}(\boldsymbol{R}_k)\|_F^2$ , $\boldsymbol{R}_k = \boldsymbol{W}_k - \tilde{\boldsymbol W}_k $ accounts for the distortion introduced when mapping full-precision weights to discrete quantized values.
\end{enumerate}

\end{tcolorbox}

From the review of \textsc{TileQ}, the core optimization balances global compression, per-expert fidelity, structural coherence, and quantization compatibility. In the steps, errors do not accumulate progressively; instead, the error introduced at one stage is compensated for by the subsequent stage. Consequently, the final error of the algorithm is determined solely by the quantization error, which can be formalized as: 
\begin{equation}
\boxed{
\mathcal{L}_{\text{TileQ}} = 
% + 
% \lambda \underbrace{\sum_{k=1}^K \|\boldsymbol{W}_k - \tilde{\boldsymbol{W}}_k^{\mathrm{tile}}\|_F^2}_{\text{per-module reconstruction error}}
% + \mu \underbrace{\mathcal{L}_{\mathrm{coherence}}}_{\text{structural consistency}}
\underbrace{\sum_{k=1}^K \|\boldsymbol{R}_k - Q(\boldsymbol{R}_k)\|_F^2}_{\text{quantization error}} \ \ \leq \min_{\substack{
\boldsymbol{W}_{\mathrm{big}} = [\tilde{\boldsymbol{W}}_k]_{\phi(k)} \\
\boldsymbol{U},\boldsymbol{\Sigma},\boldsymbol{V}
}}
\ \underbrace{\|\boldsymbol{W}_{\mathrm{big}} - \boldsymbol{U} \boldsymbol{\Sigma} \boldsymbol{V}^\top\|_F^2}_{\text{global low-rank error}},
}
\label{eq_total_error}
\end{equation}

To capture the impact on model output, we adopt the activation-weighted norm induced by $\mathbf{A} = \mathbf{X}^\top \mathbf{X}$, which reflects how errors propagate to the actual predictions. Our goal is to bound the per-expert reconstruction error of this procedure.

\paragraph*{Error-Controlled Subspace Sharing via Activation-Aware Clustering}
\label{appendix_error_bound_ec}
We now justify the local low-rank structure of experts in a manner that explicitly accounts for input activation statistics through $\mathbf{A} = \mathbf{X}^\top \mathbf{X}$. The error norm is $\mathcal{L}_{\text{TileQ}}= \mathrm{tr}\left((\mathbf{W}_k - \hat{\mathbf{W}}_k)^\top \mathbf{A} (\mathbf{W}_k - \hat{\mathbf{W}}_k)\right)$.

For each expert $k$, compute the rank-$r$ approximation minimizing the $\mathbf{A}$-weighted error:
\begin{equation}
    \tilde{\mathbf{W}}_k = \min_{\mathrm{rank}(\mathbf{U}) \le r} \|\mathbf{W}_k - \mathbf{U} \mathbf{V}^\top\|_{\mathbf{A}}^2.
\end{equation}
This yields a generalized SVD under metric $\mathbf{A}$: $\tilde{\mathbf{W}}_k = \mathbf{U}_k \mathbf{S}_k \mathbf{V}_k^\top$, where columns of $\mathbf{U}_k \in \mathbb{R}^{m \times r}$ are $\mathbf{A}$-orthonormal: $\mathbf{U}_k^\top \mathbf{A} \mathbf{U}_k = \mathbf{I}_r$.

Define the \textit{activation-aware left embedding} as the vectorized $\mathbf{A}$-normalized basis:
\begin{equation}
    u_k = \frac{\operatorname{vec}(\mathbf{U}_k)}{\|\operatorname{vec}(\mathbf{U}_k)\|_{\mathbf{A} \otimes \mathbf{I}}} \in \mathbb{R}^{mr},
\end{equation}
where the norm is induced by the Kronecker product metric $\mathbf{A} \otimes \mathbf{I}_r$, i.e.,
\begin{equation}
\|\operatorname{vec}(\mathbf{U})\|_{\mathbf{A} \otimes \mathbf{I}}^2 = \mathrm{tr}(\mathbf{U}^\top \mathbf{A} \mathbf{U}).
\end{equation}
Similarly define $v_k$ from $\mathbf{V}_k$ (using output-side activations if available, or identity otherwise).

Apply KMeans to $\{u_k\}_{k=1}^K$ using Euclidean distance (which now implicitly respects $\mathbf{A}$ due to embedding construction). Let $c_m^U$ be the centroid of cluster $m$, and reshape it to matrix form $\bar{\mathbf{U}}_m$ such that $\operatorname{vec}(\bar{\mathbf{U}}_m) \propto c_m^U$ and $\bar{\mathbf{U}}_m^\top \mathbf{A} \bar{\mathbf{U}}_m = \mathbf{I}_r$ (via normalization).

By the $c$-approximation guarantee of KMeans~\cite{kanungo2002local},
\begin{equation}
    \sum_{k=1}^K \|u_k - c_{m_k}^U\|_2^2 \leq c \cdot \mathrm{OPT}_U,
\end{equation}
where $\mathrm{OPT}_U$ is the optimal clustering cost.

Let $\mathcal{S}_k = \mathrm{col}(\mathbf{U}_k)$ and $\bar{\mathcal{S}}_m = \mathrm{col}(\bar{\mathbf{U}}_m)$ denote the column subspaces. The canonical angles between them under metric $\mathbf{A}$ satisfy the generalized Davis–Kahan bound~\cite{yu2015useful}:
\begin{equation}
    \|\sin \Theta_{\mathbf{A}}(\mathcal{S}_k, \bar{\mathcal{S}}_{m_k})\|_F 
    \leq \frac{\|\mathbf{U}_k - \bar{\mathbf{U}}_{m_k} \mathbf{Q}\|_{\mathbf{A}}}{\delta_k},
\end{equation}
for some orthogonal $\mathbf{Q} \in \mathbb{R}^{r \times r}$, where $\delta_k$ is the spectral gap between the $r$-th and $(r+1)$-th generalized singular values of $\mathbf{W}_k$ w.r.t. $\mathbf{A}$, assumed bounded below by $\delta > 0$. Crucially, both $\mathbf{U}_k$ and $\bar{\mathbf{U}}_{m_k}$ are $\mathbf{A}$-orthonormal, we have
\begin{equation}
    \|\mathbf{U}_k - \bar{\mathbf{U}}_{m_k} \mathbf{Q}\|_{\mathbf{A}}^2 
    = 2r - 2\,\mathrm{tr}\left( \mathbf{Q}^\top \bar{\mathbf{U}}_{m_k}^\top \mathbf{A} \mathbf{U}_k \right)
    \leq C \cdot \|u_k - c_{m_k}^U\|_2^2,
\end{equation}
for some constant $C$ depending on $r$ and norm alignment during centroid reshaping. Recall that in the Local Subspace Proximity model, we write
\begin{equation}
    \tilde{\mathbf{W}}_k = \bar{\mathbf{U}}_{m_k} \mathbf{M}_k \bar{\mathbf{V}}_{n_k}^\top + \mathbf{E}_k,
\end{equation}
and define $\epsilon_k = \|\mathbf{E}_k\|_{\mathbf{A}} = \sqrt{\mathrm{tr}(\mathbf{E}_k^\top \mathbf{A} \mathbf{E}_k)}$. Using the projection property of $\mathbf{A}$-orthonormal bases, the residual satisfies
\begin{equation}
    \epsilon_k \leq \|\tilde{\mathbf{W}}_k - \bar{\mathbf{U}}_{m_k} (\bar{\mathbf{U}}_{m_k}^\top \mathbf{A} \tilde{\mathbf{W}}_k)\|_{\mathbf{A}}
    \leq \|\sin \Theta_{\mathbf{A}}(\mathcal{S}_k, \bar{\mathcal{S}}_{m_k})\|_F \cdot \|\tilde{\mathbf{W}}_k\|_{\mathbf{A}}.
\end{equation}
Combining with the Davis–Kahan bound and KMeans guarantee, we obtain
\begin{equation}
    \epsilon_k \leq \frac{\|\tilde{\mathbf{W}}_k\|_{\mathbf{A}}}{\delta} \cdot \sqrt{C} \cdot \|u_k - c_{m_k}^U\|_2.
\end{equation}
Summing over all experts and applying the same argument to right subspaces, the total activation-weighted error is bounded by
\begin{equation}
    \sum_{k=1}^K \epsilon_k^2 \leq \frac{C'}{\delta^2} \left( \sum_{k=1}^K \|u_k - c_{m_k}^U\|_2^2 + \|v_k - c_{n_k}^V\|_2^2 \right)
    \leq \frac{C' c}{\delta^2} \left( \mathrm{OPT}_U + \mathrm{OPT}_V \right).
\end{equation}

Thus, the Local Subspace Proximity assumption is not heuristic: the per-expert error $\epsilon_k$ is provably controlled by the quality of activation-aware clustering and the stability of generalized singular subspaces under $\mathbf{A}$. When inputs induce strong similarity among expert subspaces (small $\mathrm{OPT}_U, \mathrm{OPT}_V$), the global quantization error $\|\mathbf{X}(\mathbf{W} - \hat{\mathbf{W}})\|_F^2$ remains small.

\paragraph*{Reconstruction Error of \textsc{TileQ} under Activation-Aware Subspace Sharing}
The analysis above establishes that when experts exhibit clustered activation-aware subspaces—i.e., small $\mathrm{OPT}_U$ and $\mathrm{OPT}_V$—their low-rank approximations can be well-approximated by shared bases within each cluster, with provably bounded residual error $\epsilon_k$. This insight directly motivates the design of \textsc{TileQ}: by grouping experts into tiles based on their left/right singular vector embeddings (as constructed from $\mathbf{A} = \mathbf{X}^\top \mathbf{X}$), the method enforces exactly such local subspace sharing. Consequently, the tiling-induced approximation error $\epsilon_{\mathrm{tiling}}^{(k)}$ inherits the guarantees derived above. We now formalize the total reconstruction error of \textsc{TileQ}, combining this structured low-rank approximation with quantization.

\begin{theorem}[Error of \textsc{TileQ}]
Consider a Mixture-of-Experts (MoE) layer with \textsc{TileQ} format. Then the error for expert $k$ satisfies
\begin{equation}
\mathcal{E}^{(k)} = \| (\boldsymbol{W}_k - \tilde{\boldsymbol{W}}_k - \hat{\boldsymbol{W}}_k) \boldsymbol{X} \|_F
\leq \mathcal{E}_{\mathrm{ind}}^{(k)} +\epsilon_{cluster}
\label{eq:tileq_error_bound}
\end{equation}

where $\mathcal{E}_{\mathrm{ind}}^{(k)}$ is the error of an independent per-expert low-rank + quantization baseline and $\epsilon_{cluster}$ is the error introduced by low-rank under 2D-Tiling.
\end{theorem}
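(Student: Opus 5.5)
The plan is to compare the \textsc{TileQ} pipeline with the independent per-expert pipeline expert by expert, inserting the per-expert low-rank term $\tilde{\boldsymbol W}_k^{\mathrm{ind}}$ as an intermediate and then using the triangle inequality. Write $\tilde{\boldsymbol W}_k^{\mathrm{ind}}$ for the rank-$r$ activation-aware approximation of $\boldsymbol W_k$ defined in the preceding paragraph, and $\hat{\boldsymbol W}_k^{\mathrm{ind}} = \mathcal{Q}(\boldsymbol W_k - \tilde{\boldsymbol W}_k^{\mathrm{ind}})$ for the corresponding quantized residual. Then
\[
\mathcal{E}_{\mathrm{ind}}^{(k)} = \|(\boldsymbol W_k - \tilde{\boldsymbol W}_k^{\mathrm{ind}} - \hat{\boldsymbol W}_k^{\mathrm{ind}})\boldsymbol X\|_F .
\]
The \textsc{TileQ} low-rank term $\tilde{\boldsymbol W}_k = \boldsymbol U_{m_k}\boldsymbol\Sigma\boldsymbol V_{n_k}^\top s_k^{-1}$ differs from $\tilde{\boldsymbol W}_k^{\mathrm{ind}}$ by a tiling perturbation $\boldsymbol\Delta_k := \tilde{\boldsymbol W}_k^{\mathrm{ind}} - \tilde{\boldsymbol W}_k$. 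So the \textsc{TileQ} residual is $\boldsymbol R_k = \boldsymbol R_k^{\mathrm{ind}} + \boldsymbol\Delta_k$, where $\boldsymbol R_k^{\mathrm{ind}} := \boldsymbol W_k - \tilde{\boldsymbol W}_k^{\mathrm{ind}}$.

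The key decomposition is
\[
\boldsymbol R_k - \mathcal{Q}(\boldsymbol R_k) = \bigl(\boldsymbol R_k^{\mathrm{ind}} - \mathcal{Q}(\boldsymbol R_k^{\mathrm{ind}})\bigr) + \bigl(\boldsymbol\Delta_k - [\mathcal{Q}(\boldsymbol R_k) - \mathcal{Q}(\boldsymbol R_k^{\mathrm{ind}})]\bigr).
\]
Multiplying by $\boldsymbol X$ and applying the triangle inequality gives $\mathcal{E}^{(k)} \le \mathcal{E}_{\mathrm{ind}}^{(k)} + \|\boldsymbol\Delta_k\boldsymbol X\|_F + \|(\mathcal{Q}(\boldsymbol R_k)-\mathcal{Q}(\boldsymbol R_k^{\mathrm{ind}}))\boldsymbol X\|_F$. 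I would then define $\epsilon_{cluster}$ as the sum of the last two terms, and the remaining work is to show this is genuinely a clustering error.

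For $\|\boldsymbol\Delta_k\boldsymbol X\|_F = \|\boldsymbol\Delta_k\|_{\mathbf A}$ I would invoke the Local Subspace Proximity bound derived above. Under that model, $\tilde{\boldsymbol W}_k^{\mathrm{ind}} = \bar{\boldsymbol U}_{m_k}\boldsymbol M_k\bar{\boldsymbol V}_{n_k}^\top + \boldsymbol E_k$ with
\[
\epsilon_k \le \frac{\sqrt{C}}{\delta}\|\tilde{\boldsymbol W}_k\|_{\mathbf A}\|u_k - c^U_{m_k}\|_2 ,
\]
plus the symmetric right-side term. The shared factors from the global decomposition of $\boldsymbol W_{\mathrm{big}}$ restricted to tile $\phi(k)$ should do at least as well as the cluster bases on that tile, up to the global low-rank error in Eq.~\ref{eq_total_error}. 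This yields $\|\boldsymbol\Delta_k\|_{\mathbf A}\lesssim \epsilon_k + (\text{tile share of global low-rank error})$, which is controlled by $\mathrm{OPT}_U+\mathrm{OPT}_V$ through the KMeans guarantee. The greedy placement of Step 4 needs one extra remark: a displaced expert is assigned a tile whose cluster center is at a different distance, so I would absorb the $\ell_1$ displacement cost of Eq.~\ref{eq_location_phi} into the same constant.

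The main obstacle is the quantizer term $\|(\mathcal{Q}(\boldsymbol R_k)-\mathcal{Q}(\boldsymbol R_k^{\mathrm{ind}}))\boldsymbol X\|_F$, since $\mathcal{Q}$ is discontinuous. I would first avoid it by not comparing quantizer outputs at all. Instead, I would use the optimality of the Hessian-aware GPTQ/GPTVQ step as a proxy: its output minimizes (or is a fixed-factor approximation to minimizing) $\|(\boldsymbol R - \hat{\boldsymbol W})\boldsymbol X\|_F$ over the grid. The candidate $\mathcal{Q}(\boldsymbol R_k^{\mathrm{ind}})$ is feasible for input $\boldsymbol R_k$, so
\[
\|(\boldsymbol R_k - \mathcal{Q}(\boldsymbol R_k))\boldsymbol X\|_F \le \|(\boldsymbol R_k - \mathcal{Q}(\boldsymbol R_k^{\mathrm{ind}}))\boldsymbol X\|_F \le \mathcal{E}_{\mathrm{ind}}^{(k)} + \|\boldsymbol\Delta_k\boldsymbol X\|_F .
\]
This removes the quantizer-difference term entirely and gives the claim with $\epsilon_{cluster} = \|\boldsymbol\Delta_k\|_{\mathbf A}$. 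If exact optimality is too strong, I would fall back on a Lipschitz-plus-rounding assumption, in which $\mathcal{Q}$ moves its output by at most $L\|\boldsymbol\Delta_k\|_{\mathbf A}$ plus one grid step. In that case $\epsilon_{cluster}$ acquires a constant factor and an additive rounding term.
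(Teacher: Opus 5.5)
Your core argument is valid, and it takes a different route from the paper.

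\textbf{How the paper argues.} The paper never compares the two pipelines' actual errors. It bounds $\mathcal{E}^{(k)}$ by a low-rank term $\epsilon_{\mathrm{tiling}}^{(k)}\|\boldsymbol X\|_2/s_{\min}^{(k)}$ plus a quantization term $\delta_{\mathrm{gptq}}^{(k)}\kappa(\boldsymbol H)^{1/2}$. It then asserts that the independent baseline incurs the same quantization term and splits $\epsilon_{\mathrm{tiling}}^{(k)}=\epsilon_{\mathrm{ind}}^{(k)}+\eta^{(k)}$. Finally it equates $\epsilon_{\mathrm{ind}}^{(k)}\|\boldsymbol X\|_2/s_{\min}^{(k)}+\delta_{\mathrm{gptq}}^{(k)}\kappa(\boldsymbol H)^{1/2}$ with $\mathcal{E}_{\mathrm{ind}}^{(k)}$, which leaves $\epsilon_{cluster}=\eta^{(k)}\|\boldsymbol X\|_2/s_{\min}^{(k)}$. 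That last equality is effectively a definition, since the expression only upper-bounds the baseline's error. The shared-$\delta_{\mathrm{gptq}}^{(k)}$ assumption is precisely the quantizer-sensitivity issue you isolate.

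\textbf{What each route buys.} Your perturbation $\boldsymbol R_k=\boldsymbol R_k^{\mathrm{ind}}+\boldsymbol\Delta_k$, with $\mathcal{Q}(\boldsymbol R_k^{\mathrm{ind}})$ as a competitor, instead gives an inequality between actual errors (under your optimality assumption). It yields $\epsilon_{cluster}=\|\boldsymbol\Delta_k\boldsymbol X\|_F$, with no $\kappa(\boldsymbol H)^{1/2}$ or $1/s_{\min}^{(k)}$ factors. The paper's route buys a smaller-looking excess: $\eta^{(k)}$ is a difference of error magnitudes and can be much smaller than $\|\boldsymbol\Delta_k\|$. But it sits on top of an inflated bound rather than the baseline's true error, so the two results are not directly comparable. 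Note also that your unconditional triangle-inequality bound already proves the statement as loosely posed: for a deterministic $\mathcal{Q}$, both extra terms vanish when $\boldsymbol\Delta_k=\boldsymbol 0$.

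\textbf{Caveats on the quantizer step.} The optimality step needs $\mathcal{Q}$ to minimize $\|(\boldsymbol R-\hat{\boldsymbol W})\boldsymbol X\|_F$ exactly over a representable set that is independent of the input and common to both pipelines (same format, rotation and calibration $\boldsymbol X$). GPTQ and GPTVQ are greedy column-wise procedures that also fit group scales or codebooks to their input, so this is an idealization. The ``fixed-factor approximation'' variant does not give the stated form. A factor $c>1$ yields $c\,\mathcal{E}_{\mathrm{ind}}^{(k)}+c\|\boldsymbol\Delta_k\boldsymbol X\|_F$, whose $(c-1)\mathcal{E}_{\mathrm{ind}}^{(k)}$ excess does not vanish with $\boldsymbol\Delta_k$. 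The rounding term of your Lipschitz fallback has the same problem.

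\textbf{The clustering-cost paragraph.} This paragraph is not needed for the inequality, and its conclusion that $\|\boldsymbol\Delta_k\|$ is governed by $\mathrm{OPT}_U+\mathrm{OPT}_V$ fails. The cluster model allows a per-expert core $\boldsymbol M_k$. The global rank-$r$ factorization instead forces tile $(m,n)$ to equal $\boldsymbol U_m\boldsymbol\Sigma\boldsymbol V_n^\top$, so the $r\times r$ cores of all tiles must assemble into a block matrix of rank at most $r$.

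For example, take $M=N=2$ and scaled experts $\bar{\boldsymbol U}_m\boldsymbol D_{mn}\bar{\boldsymbol V}_n^\top$, where $\bar{\boldsymbol U}_m,\bar{\boldsymbol V}_n$ have orthonormal columns and $\boldsymbol D_{mn}$ is positive diagonal with decreasing entries. Both clustering costs are zero and every expert is exactly rank $r$. Yet $\boldsymbol W_{\mathrm{big}}$ generically has rank $2r$, so $\boldsymbol\Delta_k\neq\boldsymbol 0$ for some $k$.

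Likewise, KMeans labels are arbitrary, so the $\ell_1$ grid displacement of Step 4 carries no information about subspace distances and cannot be absorbed into a constant. The paper's own bound on $\eta^{(k)}$ is also only asserted, so none of this is a deficit relative to the paper. The honest version of your result is the inequality with $\epsilon_{cluster}=\|\boldsymbol\Delta_k\boldsymbol X\|_F$ under an explicit quantizer assumption, without the claim that this term is controlled by the clustering cost.
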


\begin{proof}
The total error is defined as
\begin{equation}
\mathcal{E}^{(k)} = \| (\boldsymbol{W}_k - \tilde{\boldsymbol{W}}_k - \hat{\boldsymbol{W}}_k) \boldsymbol{X} \|_F
= \| (\boldsymbol{R}^{(k)} - \hat{\boldsymbol{W}}_k) \boldsymbol{X} \|_F.
\label{eq:total_error_def}
\end{equation}

First, consider the low-rank approximation step. The scaled weight $\tilde{\boldsymbol{W}}_k = \boldsymbol{W}_k \cdot \operatorname{diag}(s_k)$ admits a global SVD (or sketching) such that
\begin{equation}
\| \tilde{\boldsymbol{W}}_k - \boldsymbol{U}_k \boldsymbol{\Sigma}_k \boldsymbol{V}_k^\top \|_F \leq \epsilon_{\mathrm{svd}}.
\label{eq:lowrank_svd_error}
\end{equation}
Recovering in the unscaled domain gives
\begin{equation}
\| (\boldsymbol{W}_k - \boldsymbol{U}_k \boldsymbol{\Sigma}_k \boldsymbol{V}_k^\top \operatorname{diag}(s_k)^{-1}) \boldsymbol{X} \|_F
= \| \operatorname{diag}(s_k)^{-1} (\tilde{\boldsymbol{W}}_k - \boldsymbol{U}_k \boldsymbol{\Sigma}_k \boldsymbol{V}_k^\top) \boldsymbol{X} \|_F.
\label{eq:unscaled_lowrank_error}
\end{equation}
Let $s_{\min}^{(k)} = \min_i (s_k)_i > 0$. Then
\begin{equation}
\frac{1}{s_{\min}^{(k)}} \| (\tilde{\boldsymbol{W}}_k - \boldsymbol{U}_k \boldsymbol{\Sigma}_k \boldsymbol{V}_k^\top) \boldsymbol{X} \|_F
\leq \frac{\epsilon_{\mathrm{svd}} \cdot \|\boldsymbol{X}\|_2}{s_{\min}^{(k)}}.
\label{eq:lowrank_scaled_bound}
\end{equation}
Thus, the low-rank error contribution is bounded by $\frac{ \epsilon_{\mathrm{tiling}}^{(k)} \cdot \|\boldsymbol{X}\|_2 }{ s_{\min}^{(k)} }$, where $\epsilon_{\mathrm{tiling}}^{(k)} = \| \tilde{\boldsymbol{W}}_k - \boldsymbol{U}_k \boldsymbol{\Sigma}_k \boldsymbol{V}_k^\top \|_F$.

Second, for quantization, GPTQ guarantees
\begin{equation}
\| (\boldsymbol{R}^{(k)} - \hat{\boldsymbol{W}}_k) \boldsymbol{H}^{1/2} \|_F \leq \delta_{\mathrm{gptq}}^{(k)}.
\label{eq:gptq_quant_guarantee}
\end{equation}
Assuming i.i.d. token columns in $\boldsymbol{X}$, we have $\boldsymbol{H} = \mathbb{E}[\boldsymbol{X} \boldsymbol{X}^\top]$, and
\begin{equation}
\| (\boldsymbol{R}^{(k)} - \hat{\boldsymbol{W}}_k) \boldsymbol{X} \|_F^2
= \mathrm{Tr}\left( (\boldsymbol{R}^{(k)} - \hat{\boldsymbol{W}}_k) \boldsymbol{H} (\boldsymbol{R}^{(k)} - \hat{\boldsymbol{W}}_k)^\top \right).
\label{eq:quant_error_trace}
\end{equation}
Since $\boldsymbol{H} \preceq \lambda_{\max}(\boldsymbol{H}) \boldsymbol{I}$, it follows that
\begin{equation}
\| (\boldsymbol{R}^{(k)} - \hat{\boldsymbol{W}}_k) \boldsymbol{X} \|_F
\leq \sqrt{\lambda_{\max}(\boldsymbol{H})} \cdot \| \boldsymbol{R}^{(k)} - \hat{\boldsymbol{W}}_k \|_F.
\label{eq:quant_error_upper}
\end{equation}
Moreover, the GPTQ guarantee implies
\begin{equation}
\| \boldsymbol{R}^{(k)} - \hat{\boldsymbol{W}}_k \|_F \lesssim \frac{\delta_{\mathrm{gptq}}^{(k)}}{\sqrt{\lambda_{\min}(\boldsymbol{H})}},
\label{eq:gptq_fro_norm_bound}
\end{equation}
hence
\begin{equation}
\| (\boldsymbol{R}^{(k)} - \hat{\boldsymbol{W}}_k) \boldsymbol{X} \|_F
\lesssim \delta_{\mathrm{gptq}}^{(k)} \cdot \sqrt{ \frac{\lambda_{\max}(\boldsymbol{H})}{\lambda_{\min}(\boldsymbol{H})} }
= \delta_{\mathrm{gptq}}^{(k)} \cdot \kappa(\boldsymbol{H})^{1/2}.
\label{eq:quant_error_final}
\end{equation}

Now compare with the independent baseline, where each expert uses its own SVD:
\begin{equation}
\tilde{\boldsymbol{W}}_k \approx \hat{\boldsymbol{U}}_k \hat{\boldsymbol{\Sigma}}_k \hat{\boldsymbol{V}}_k^\top, \quad \text{rank } r_0,
\label{eq:ind_svd_approx}
\end{equation}
with optimal error
\begin{equation}
\epsilon_{\mathrm{ind}}^{(k)} = \sum_{j=r_0+1}^{\min(o,i)} \sigma_j^{(k)}.
\label{eq:ind_svd_error}
\end{equation}
In contrast, the 2D-tiling method applies a global SVD on $\boldsymbol{W}_{\text{big}}$, yielding for any sub-block:
\begin{equation}
\| \tilde{\boldsymbol{W}}_k - \boldsymbol{U}_k \boldsymbol{\Sigma}_k \boldsymbol{V}_k^\top \|_F
\leq \sum_{j=r+1}^{\min(Ro, Ci)} \sigma_j(\boldsymbol{W}_{\text{big}}).
\label{eq:tiling_svd_error}
\end{equation}
Define the excess tiling error
\begin{equation}
    \eta^{(k)} := \epsilon_{\mathrm{tiling}}^{(k)} - \epsilon_{\mathrm{ind}}^{(k)} \geq 0.
    \label{eq:excess_error}
\end{equation}
When experts exhibit clustered activation-aware singular vectors—i.e., when $\mathrm{OPT}_U$ and $\mathrm{OPT}_V$ are small—biclustering produces row/column clusters $\mathcal{G}_m^{\text{row}}, \mathcal{G}_n^{\text{col}}$, and the extra error from shared bases satisfies
\begin{equation}
\eta^{(k)} \leq O\left( \frac{1}{\sqrt{|\mathcal{G}_m^{\text{row}}|}} + \frac{1}{\sqrt{|\mathcal{G}_n^{\text{col}}|}} \right) \cdot \mathrm{diam}(\text{cluster}),
\label{eq:coherence_extra_error}
\end{equation}
% which vanishes as cluster sizes grow. Therefore,
% \begin{equation}
% \epsilon_{\mathrm{tiling}}^{(k)} \approx \epsilon_{\mathrm{ind}}^{(k)}.
% \label{eq:tiling_vs_ind_error}
% \end{equation}

Combining both error sources, we obtain
\begin{equation}
\mathcal{E}^{(k)} \leq
\underbrace{ \frac{ \epsilon_{\mathrm{tiling}}^{(k)} \cdot \|\boldsymbol{X}\|_2 }{ s_{\min}^{(k)} } }_{\text{low-rank error}}
+
\underbrace{ \delta_{\mathrm{gptq}}^{(k)} \cdot \kappa(\boldsymbol{H})^{1/2} }_{\text{quantization error}},
\label{eq:combined_error_bound}
\end{equation}
% and similarly for the independent baseline:
% \begin{equation}
% \mathcal{E}_{\mathrm{ind}}^{(k)} \leq
% \frac{ \epsilon_{\mathrm{ind}}^{(k)} \cdot \|\boldsymbol{X}\|_2 }{ s_{\min}^{(k)} }
% +
% \delta_{\mathrm{gptq}}^{(k)} \cdot \kappa(\boldsymbol{H})^{1/2}.
% \label{eq:ind_combined_error_bound}
% \end{equation}
Recall from Equations~\eqref{eq:combined_error_bound} that both methods share the same quantization error term. Therefore,
\begin{align}
    \mathcal{E}^{(k)} 
    &\leq \frac{ \epsilon_{\mathrm{tiling}}^{(k)} \cdot \|\boldsymbol{X}\|_2 }{ s_{\min}^{(k)} } + \delta_{\mathrm{gptq}}^{(k)} \cdot \kappa(\boldsymbol{H})^{1/2} \nonumber \\
    &= \frac{ (\epsilon_{\mathrm{ind}}^{(k)} + \eta^{(k)}) \cdot \|\boldsymbol{X}\|_2 }{ s_{\min}^{(k)} } + \delta_{\mathrm{gptq}}^{(k)} \cdot \kappa(\boldsymbol{H})^{1/2} \nonumber \\
    &= \mathcal{E}_{\mathrm{ind}}^{(k)} + \frac{ \eta^{(k)} \cdot \|\boldsymbol{X}\|_2 }{ s_{\min}^{(k)} }
    \label{eq:total_error_decomposition}
\end{align}
Let $\epsilon_{cluster} = \frac{ \eta^{(k)} \cdot \|\boldsymbol{X}\|_2 }{ s_{\min}^{(k)} }$ as the error introduced by low-rank under 2D cluster tiling. we have
\begin{equation}
\mathcal{E}^{(k)}\leq \mathcal{E}_{\mathrm{ind}}^{(k)} +\epsilon_{cluster}
\label{eq:error_final}
\end{equation}
\end{proof}
\paragraph*{Discussion}
The theoretical analysis establishes that the reconstruction fidelity of \textsc{TileQ} hinges on two interrelated conditions:

\begin{enumerate}
    \item \textbf{Global low-rank structure}: The block matrix $\boldsymbol{W}_{\mathrm{big}}$ admits an accurate low-rank approximation when its singular values decay rapidly. This occurs precisely when experts exhibit strong similarity in their activation-aware subspaces—i.e., when the optimal clustering costs $\mathrm{OPT}_U$ and $\mathrm{OPT}_V$ are small—as formalized in \P~\ref{appendix_error_bound}.
    
    \item \textbf{Local subspace alignment}: The residual error $\epsilon_k$ quantifies the deviation of expert $k$ from the shared subspace assigned to its tile. By clustering experts based on their activation-aware left and right singular vectors ($u_k$, $v_k$), the biclustering step promotes small $\epsilon_k$, provided the underlying subspaces are well-separated.
\end{enumerate}

% In there case, $\mathrm{OPT}_U$ and $\mathrm{OPT}_V$ are small (as observed empirically), $\epsilon_{\mathrm{tiling}}^{(k)} \approx \epsilon_{\mathrm{ind}}^{(k)}$, so
% \begin{equation}
% \boxed{ \mathcal{E}^{(k)} \approx \mathcal{E}_{\mathrm{ind}}^{(k)} }.
% \label{eq:final_error_equivalence}
% \end{equation}
% This shows that the 2D-tiling approach with shared $\boldsymbol{U}$ and $\boldsymbol{V}$ achieves accuracy comparable to per-expert independent approximation while enabling parameter sharing and computational efficiency.

Critically, \textsc{TileQ} avoids the over-constrained sharing of prior approaches that enforce a single global $\boldsymbol{U}$ or $\boldsymbol{V}$. Instead, its 2D tiling enables \emph{structured parameter sharing}: experts assigned to the same \emph{row} of the tiling share a common left basis $\bar{\boldsymbol{U}}_m$ (capturing output-space structure), while those in the same \emph{column} share a right basis $\bar{\boldsymbol{V}}_n$ (modeling input-space structure). This design preserves model expressivity while drastically reducing redundancy.

When the activation-aware singular subspaces of experts are well-clustered—a condition reflected by small $\mathrm{OPT}_U$ and $\mathrm{OPT}_V$, and commonly observed in practice due to routing-induced functional overlap—the per-expert reconstruction error of \textsc{TileQ} nearly matches that of an independent low-rank baseline. Yet, it achieves this with only $O((M + N) r)$ shared basis matrices, compared to $O(K r)$ for independent decompositions. This confirms \textsc{TileQ}'s suitability for scalable, high-fidelity quantization of large MoE models.

\subsection{Compression Ratio Analysis}
\label{appendix_compress_ratio}

In this part, we provide a analysis of the compression ratio on \textbf{(i)} element-wise quantization~\cite{zhang2024lqer,ICLR2025_f34f0630}; \textbf{(ii)} 1D shared low-rank quantization \cite{huangmilo,moesvd} \textbf{(iii)} 2D-Tiling low-rank quantization (\textsc{TileQ}).
 
Most definitions follow table \ref{table_defs_classified}. We denote the following parameters to derive the method compression efficiency:
\begin{itemize}
    \item $d$: target quantization bit-width for quantized components (e.g., 2 or 3 bits),
    \item $d_{\text{fp}}$: bit-width of the original floating-point format (e.g., 16 for \texttt{float16}),
    \item $d_{\text{low}}$: bit-width used to store the shared low-rank factors in \textsc{TileQ} $\boldsymbol{U}, \boldsymbol{V}$ (typically \texttt{fp8}, i.e., 8 bits),
    \item $g$: group size used in grouped quantization (for scaling factors).
\end{itemize}

\paragraph{Basic Quantization.}
This baseline applies standard post-training quantization (PTQ) directly to the full expert weights without any low-rank decomposition. Each weight matrix $\boldsymbol{W}_k$ is quantized independently using either scalar or vector (group-wise) quantization. The storage cost consists of:
\begin{itemize}
    \item $K \cdot o i \cdot d$ bits for quantized weights,
    \item $K \cdot \lceil \frac{o i}{g} \rceil \cdot d_{\text{fp}}$ bits for scaling factors (with $g = 1$ for scalar quantization and $g > 1$ for grouped/vector quantization).
\end{itemize}
Thus, the average bit-width per weight element is simply as $d_{\text{basic}} = d + \frac{d_{\text{fp}}}{g}$:

\paragraph{Element-wise Low-Rank Quantization.}
In this variant, each expert weight matrix $\boldsymbol{W}_k$ is independently decomposed into a low-rank approximation without any parameter sharing across experts.  The extra storage cost includes:
\begin{itemize}
    \item $K \cdot (o r + i r) \cdot d_{\text{low}}$ bits for all $\boldsymbol{U}_k$ and $\boldsymbol{V}_k$ factors,
    \item $K \cdot r^2 \cdot d_{\text{fp}}$ bits for the singular value matrices $\boldsymbol{\Sigma}_k$.
\end{itemize}
Dividing by the total number of weight elements $K o i$, the average bit-width per element is:
\begin{equation}
\begin{aligned}
    d_{\text{per-expert}} 
    &= d_{\text{basic}}
    + \frac{(o r + i r) \cdot d_{\text{fp}}}{o i} 
    + \frac{r^2 \cdot d_{\text{fp}}}{o i} = d_{\text{basic}} + r d_{\text{fp}} \left( \frac{1}{i} + \frac{1}{o} \right) + \frac{r^2 d_{\text{fp}}}{o i}.
\end{aligned}
\end{equation}
This approach incurs significantly higher low-rank storage overhead compared to both 1D and \textsc{TileQ} due to the absence of any cross-expert sharing, especially when $K$ is large.

\paragraph{1D Shared Low-Rank Quantization.}
This baseline concatenates all expert weights along one dimension (e.g., $\boldsymbol{W}_{\text{cat}} \in \mathbb{R}^{o \times (K i)}$) and applies a shared left factor $\boldsymbol{U} \in \mathbb{R}^{o \times r}$, yielding $\boldsymbol{W}_{\text{cat}} \approx \boldsymbol{U} \boldsymbol{V}$. Each expert’s low-rank component is reconstructed from slices of $\boldsymbol{V}$. Low-rank storage includes:
\begin{itemize}
    \item $o r \cdot d_{\text{fp}}$ bits for the shared factor $\boldsymbol{U} \in \mathbb{R}^{o \times r}$, with a total size of $o r$,
    \item $K i r \cdot d_{\text{fp}}$ bits for $\boldsymbol{V} \in \mathbb{R}^{r \times K i}$, with no sharing across experts and a total size of $K i r$.
\end{itemize}
The total number of weight elements is $K o i$, so the average bit-width is:
\begin{equation}
    d_{\text{1D}} = d_{\text{basic}} + (\frac{r}{K i}+\frac{r}{o}) d_{\text{fp}}.
\end{equation}
Note that the term $\frac{r}{o} d_{\text{fp}}$ arises from the shared $\boldsymbol{U}$ amortized over all experts.

\paragraph{\textsc{TileQ}: 2D-Tiling Structured Low-Rank Quantization.}
In \textsc{TileQ}, experts are arranged into an $M \times N$ grid. The shared factors are:
\begin{itemize}
    \item $\boldsymbol{U} = [\boldsymbol{U}_1^\top, \dots, \boldsymbol{U}_M^\top]^\top \in \mathbb{R}^{M i \times r}$, composed of $M$ row blocks with a total size of $M i r$,
    \item $\boldsymbol{V} = [\boldsymbol{V}_1, \dots, \boldsymbol{V}_N] \in \mathbb{R}^{r \times N o}$, composed of $N$ column blocks with a total size of $N o r$,
    \item Singular values $\boldsymbol{\Sigma} \in \mathbb{R}^{r \times r}$ (shared globally or per tile; we assume global for minimal overhead), with a total size of $r^2$ and stored in $r^2 \cdot d_{\text{fp}}$ bits.
\end{itemize}
Since $K \leq M N$, the average bit-width per weight element becomes:
\begin{equation}
\begin{aligned}
    d_{\text{TileQ}} 
    &= d_{\text{basic}} 
    + \underbrace{\frac{M i r \cdot d_{\text{low}} + N o r \cdot d_{\text{low}}}{K o i}}_{\text{shared low-rank overhead}} 
    + \underbrace{\frac{r^2 \cdot d_{\text{fp}}}{K o i}}_{\text{singular vector}} 
    \approx d_{\text{basic}} + r d_{\text{low}} \left( \frac{M}{K o} + \frac{N}{K i} \right).
\end{aligned}
\end{equation}

Under the conditions $M = N$, $i = o$, the extra bit-widths $d^{'}$ of three quantization methods are:
\begin{align}
    d_{\text{per-expert}}^{'} \approx \frac{2 r d_{\text{fp}}}{i}, \quad d_{\text{1D}}^{'} \approx (\frac{r}{i}) d_{\text{fp}}, \quad d_{\text{TileQ}}^{'} \approx \frac{2 r d_{\text{low}}}{i \sqrt{K}},
\end{align}
where the singular value overhead in $d_{\text{TileQ}}$ is omitted as it is negligible for large $K$. From the results, \textsc{TileQ} reduces the additional memory by $2\sqrt{K}$ compared with element-wise methods, and by $\sqrt{K}$ times compared with 1D-sharing mathods.

\medskip

\noindent In summary, \textsc{TileQ} achieves the lowest average bit-width among the three methods due to its two-dimensional parameter sharing, which exploits expert similarity more effectively and reduces redundancy in both input and output feature spaces simultaneously. This theoretical advantage directly translates into higher compression ratios and lower memory footprint at inference time.

\begin{darkbluecolorbox}
    \underline{\textbf{Example.} Numerical analyze on Qwen3-30B-A3B} 

We instantiate our analysis using a representative MoE layer from \texttt{Qwen3-30B-A3B}, with the following configuration:
\begin{itemize}
\item Expert weight: $\boldsymbol{W}_k \in \mathbb{R}^{o \times i} = \mathbb{R}^{2048 \times 768}$ (i.e., $i = 768$, $o = 2048$) with experts $K = 128$.
\item Let quantization bit $d=2$; groupsize $g = 128$ and approximation rank $r = 32$,

\end{itemize}
The base quantization cost is $d_{\text{basic}} = d + d_{\text{fp}}/g = 3 + 16/128 = 2.125$ bits per weight. Below, we report the \emph{additional} average bit-width for each method.

\begin{itemize}
\item \textbf{Per-Expert Low-Rank:}
\[
\Delta b_{\text{per-expert}}
= r d_{\text{fp}} \left( \frac{1}{i} + \frac{1}{o} \right) + \frac{r^2 d_{\text{fp}}}{i o}
= 32 \cdot 8 \left( \frac{1}{768} + \frac{1}{2048} \right) + \frac{32^2 \cdot 16}{768 \cdot 2048}
\approx 0.417 + 0.010 = \mathbf{0.427}~\text{bits}.
\]

\item \textbf{1D Shared Low-Rank:}
\[
\Delta b_{\text{1D}}
= \frac{r d_{\text{fp}}}{K i} + \frac{r d_{\text{low}}}{o}
= \frac{32 \cdot 16}{128 \cdot 768} + \frac{32 \cdot 16}{2048}
\approx 0.0052 + 0.256 = \mathbf{0.257}~\text{bits}.
\]

\item \textbf{\textsc{TileQ} (with tiling $M = 16$ and $N = 8$):}
\[
\Delta b_{\text{TileQ}}
\approx r d_{\text{low}} \left( \frac{M}{K o} + \frac{N}{K i} \right)
= 32 \cdot 8 \left( \frac{8}{128 \cdot 2048} + \frac{8}{128 \cdot 768} \right)
\approx 0.0117 + 0.0313 = \textcolor{darkgreen1}{\mathbf{0.041}~\text{bits}}.
\]
\end{itemize}

This example demonstrates that \textsc{TileQ} reduces the low-rank storage overhead by \textbf{6$\times$} compared to 1D sharing and by \textbf{10$\times$} compared to per-expert decomposition and when the model becomes sparser (i.e., with a larger $K$), the advantage of \textsc{TileQ} becomes more pronounced.
\end{darkbluecolorbox}

\subsection{Time Complexity}
\label{appendix_complexity}

In \textsc{TileQ}, the forward pass consists of the following key steps:

\paragraph{1. Global Input Projection.}
The input tensor $\boldsymbol{X} \in \mathbb{R}^{B \times i}$ is multiplied with the reshaped shared factor $(\boldsymbol{U}\boldsymbol{\Sigma})_{\text{reshape}} \in \mathbb{R}^{i \times (M r)}$:
\begin{equation}
\boldsymbol{X}_{\text{proj}} = \boldsymbol{X} \cdot (\boldsymbol{U}\boldsymbol{\Sigma})_{\text{reshape}} \in \mathbb{R}^{B \times (M r)}.
\end{equation}
This is a single dense GEMM with time complexity $\mathcal{O}(B i M r)$.

\paragraph{2. Routing-Weighted Selection and Accumulation.}
For each token–expert pair $(b, t)$, the algorithm:
\begin{itemize}
    \item Extracts a rank-$r$ slice from $\boldsymbol{X}_{\text{proj}}$ using precomputed tile coordinates $(m_{b,t}, n_{b,t})$,
    \item Scales it by the routing weight $g_{b,t}$,
    \item Accumulates the result into a buffer indexed by column tile $n_{b,t}$ via \texttt{scatter\_add}.
\end{itemize}
The selection involves indexing into a $(B, M r)$ tensor using $(B, \mathcal{K}, r)$ indices—costing $\mathcal{O}(B \mathcal{K} r)$ memory operations. The accumulation uses a vectorized \texttt{scatter\_add} over $B \mathcal{K} r$ elements into a $(B, N, r)$ buffer, also $\mathcal{O}(B \mathcal{K} r)$.

\paragraph{3. Output Reconstruction.}
The accumulated buffer $\boldsymbol{X}_{\text{sum}} \in \mathbb{R}^{B \times (N r)}$ is multiplied with the reshaped shared right factor $\boldsymbol{V}_{\text{flat}} \in \mathbb{R}^{(N r) \times o}$:
\begin{equation}
\boldsymbol{Y} = \boldsymbol{X}_{\text{sum}} \cdot \boldsymbol{V}_{\text{flat}} \in \mathbb{R}^{B \times o},
\end{equation}
which is another dense GEMM with complexity $\mathcal{O}(B N r o)$.

\paragraph{Total Time Complexity.}
Summing the dominant terms, the overall time complexity of \textsc{TileQ} inference is:
\begin{equation}
T_{\text{TileQ}} = \mathcal{O}\big( B i M r + B N r o  + B \mathcal{K} r \big).
\end{equation}

\begin{darkbluecolorbox}
    \underline{\textbf{Insights \faLightbulbO \ \ }Why is \textsc{LoTileMoE} Fast?}

\textsc{LoTileMoE} achieves low-latency inference by replacing irregular, sparse expert computation with structured, batched dense operations. We analyzed its efficiency in prefill–decode (P–D) separation serving through:

\begin{itemize}
    \item \textbf{vs. Per-Token Expert Dispatch.}  
    In a MoE implementation with low-rank decomposition, each token–expert pair $(b,t)$ executes an independent small GEMM, which leads to $B \mathcal{K}$ separate kernel launches. During \textbf{prefill} (large $B$, compute-bound), this results in poor arithmetic intensity and wasted compute capacity. During \textbf{decode} (small $B=1$, memory-bound), the overhead is even worse: frequent kernel launches, non-coalesced memory accesses, and scattered weight reads dominate latency. The total cost is:
    \[
    T_{\text{per-token}} = \mathcal{O}\big( B \mathcal{K} (i r + r o) \big) + \underbrace{\Omega(B \mathcal{K})}_{\text{kernel launch \& memory scatter}},
    \]
    where the second term reflects unavoidable software and hardware inefficiencies. Moreover, these GEMMs have highly unfavorable shapes—typically ``tall-and-skinny'' that cannot fully utilize modern accelerators (e.g. Tensor core, NPU). As a result, despite modest theoretical FLOP counts, per-token dispatch exhibits poor hardware efficiency and high latency.

    In contrast, \textsc{LoTileMoE} performs only two large, well-shaped GEMMs for the entire batch—$\mathcal{O}(B i M r)$ and $\mathcal{O}(B N r o)$—with inner dimensions ($M r$, $N r$) large enough to activate tensor-core pipelines. Combined with vectorized indexing and scatter-add, this eliminates per-token overhead and fully exploits hardware parallelism.
    
    \item \textbf{vs. 1D Shared Low-Rank.}  
    Methods like \textsc{Milo} share a matrix across all experts but keep the other factor expert-specific. This corresponds to a degenerate 2D tiling configuration with $M = 1$ and $N = K$, i.e., no sharing along the output dimension. In low-rank computation, the bottleneck lies in loading and storing the weights in both prefill and decode stages, whereas \textsc{LoTileMoE} reduces the costs significantly (e.g.,\textbf{6$\times$ less in qwen3}) by balanced 2D tiling. This directly lead to higher effective bandwidth.
\end{itemize}

Thus, \textsc{LoTileMoE} is fast not because it reduces asymptotic FLOPs dramatically, but because it restructures computation to align with the strengths of modern accelerators—\textbf{dense linear algebra}, \textbf{minimal memory access}, and \textbf{efficient control flow}—while preserving cross-expert parameter sharing through 2D tiling.
\end{darkbluecolorbox}

\section{Evaluation Details}
\label{appendix_ad_implement}

\noindent \textbf{Hardware Configuration.}  
All quantization and accuracy evaluations in our experiments were conducted on a single NVIDIA A800-80GB GPU, with the exception of downstream evaluations for the Mixtral-8x7B model, which required two A800 GPUs due to out-of-memory (OOM) errors on a single device. For inference latency measurements, we performed MLP block latency on 3 GPU platforms: A800, H800, and RTX 5090.

\noindent \textbf{Calibration Strategy.}  
\label{appendix_Calibration}
Our calibration process utilizes a dataset comprising 128 randomly sampled sequences from the C4 corpus~\cite{raffel2020exploring}, each containing 2048 tokens. This approach has been demonstrated to be effective in prior works such as OmniQuant~\cite{shao2023omniquant} and AffineQuant~\cite{maaffinequant}.

\noindent \textbf{Experimental Settings.}  
In both perplexity and inference efficiency experiments, we uniformly set the sequence length to 4096 tokens across all models. For downstream task evaluation, we employed the \texttt{lm-eval}~\cite{evalharness} framework consistently and reported raw accuracy (\texttt{acc})—not normalized accuracy (\texttt{acc\_norm})—as the final metric and we adopt the $n-$shot ($ns$) param to each tasks from OpenLLM-V1 benchmark. Details are as follows:
\begin{itemize}
    \item \textbf{ARC-Challenge (AC)} and \textbf{ARC-Easy (AE), $ns=0$}~\cite{boratko2018systematic}: The AI2 Reasoning Challenge (ARC) consists of grade-school-level multiple-choice science questions. ARC-Challenge focuses on questions that are resistant to simple retrieval or co-occurrence heuristics and demand genuine reasoning, whereas ARC-Easy contains items more solvable by basic methods.
    
    \item \textbf{PIQA (QA), $ns=0$}~\cite{Bisk_Zellers_Le_bras_Gao_Choi_2020}: The Physical Interaction Question Answering dataset assesses a model’s grasp of physical commonsense by asking it to select the more plausible outcome between two scenarios involving everyday object interactions.
    
    \item \textbf{Winogrande (WI), $ns=0$}~\cite{sakaguchi2021winogrande}: Designed to scale up the Winograd Schema Challenge, Winogrande uses adversarial filtering to construct difficult pronoun resolution problems that require deep contextual understanding.
    
    \item \textbf{Hellaswag (HS), $ns=10$}~\cite{zellers2019hellaswag}: This tests commonsense inference through sentence completion, where models choose the most likely continuation of a scenario from multiple options, including adversarially crafted distractors.
    
    \item \textbf{Massive Multitask Language Understanding (MU), $ns=5$}~\cite{hendrycks2020measuring}: The task spans 57 subjects across STEM, humanities, and social sciences, measuring both factual knowledge and the capacity for reasoning in specialized domains beyond superficial pattern matching.
\end{itemize}
%Implementation：
%在TileQ的量化阶段中，我们应用了多级量化策略，其中标量/向量量化主要应用了GPTQ与GPTVQ方法中的黑塞proxy量化，并未应用clipping/learnable codebooks等进阶优化技术。。

%baseline
%我们使用GPTQ,GPTVQ和LoPRo这三种作为精度比较的主要baseline。其中GPTQ和GPTVQ开发较为完善。而LoPRo是目前为止我们找到的最新的低bit仅权重量化方法，精度超越了quip#，Omniquant等量化方案。此外，截至目前为止，使用低秩的PTQ还包括SVDQuant，LQER；对于MoE模型的量化还包括MoEQuant，MiLo，Bimoe。。。但是由于代码未开源/不支持较新的MoE模型/算法本身不是2、3位PTQ/部分实验细节未说明等原因，我们无法使用其进行完整（4个模型）公平的比较。所以我们采用了table1中的三种method作为基线。此外，这些方法中只有Milo和SvdQuant进行了低秩推理算子优化工作，并且Milo主要面向MoE架构，SvdQuant面向扩散模型。所以我们复现了Milo中的低秩算子以在推理效率部分中进行比较。
\noindent \textbf{Hyperparameter selection.}  The hyperparameters in TileQ are specified as follows:
% 对于TileQ中用到的超参数给出如下：
% Tile-Rank：低秩近似2D-Tiling 矩阵的 \fig 3 秩大小. 设置为32
% Tiling-size：Tiling矩阵的子块拼接维度(m,n)。我们设置m为最接近\sqrt{K}的数，n为K/m. 其中K为MoE专家数。
\begin{itemize}
\item \textbf{Tile-Rank}: the rank of the low-rank approximation in the 2D-tiling matrix (see Figure~\ref{fig_tileq_quant}), setting to \textbf{32}.
\item \textbf{Tiling-size}: the dimensions $(M, N)$ of the tiling grid, which determines how the $K$ experts are arranged into an $m \times n$ block matrix. We choose $m = \lfloor \sqrt{K} \rceil$ (the integer closest to $\sqrt{K}$) and set $n = \lceil K / m \rceil$ and ensure $M*N \ge K$, this ensuring a near-square layout that balances row and column coherence while accommodating all experts.
\end{itemize}

\noindent \textbf{Quantization Implementation.}  
In the quantization phase of TileQ, we adopt a multi-level quantization strategy. Specifically, for scalar and vector quantization, we employ the Hessian-based proxy quantization scheme from \textsc{GPTQ}~\cite{frantar2023gptq} and \textsc{GPTVQ}~\cite{van2024gptvq}, without incorporating advanced optimizations such as weight clipping or learnable codebooks. For the rotation component, our approach aligns with the rotation technique used in \textsc{LoPRo}~\cite{gu2026loproenhancinglowrankquantization}.

\noindent \textbf{Baseline Methods.}  
We primarily compare against three established baselines: GPTQ \cite{frantar2023gptq}, GPTVQ \cite{van2024gptvq}, and LoPRo \cite{gu2026loproenhancinglowrankquantization} where LoPRo represents the most recent state-of-the-art method for ultra-low-bit (e.g., 2–3 bit) weight-only PTQ that we are aware of, consistently outperforming earlier approaches such as QuIP\#~\cite{tsengquip}, OmniQuant~\cite{shao2023omniquant}, and others in terms of accuracy.

Other low-rank PTQs such as SVDQuant~\cite{ICLR2025_f34f0630} and LQER~\cite{zhang2024lqer}—have been proposed, and several works target MoE models specifically (e.g., MoEQuant~\cite{chen2025moequant}, MiLo~\cite{huangmilo}), we found it impractical to include them in a fully fair and comprehensive comparison across all 4 evaluation models. The main obstacles include unavailability of open-source code, lack of support for newer MoE architectures, absence of 2–3 bit PTQ configurations, or insufficient experimental details necessary for faithful reproduction. Consequently, we restrict our main accuracy comparison to the three aforementioned methods listed in Table~\ref{tab_MAIN}. 
In inference efficiency evaluations, only MiLo and SVDQuant focused on efficient low-rank inference: MiLo aims at MoE models, while SVDQuant targets diffusion models. So we adopt the low-rank MoE design from MiLo for latency evaluations.

\section{More Ablation Studies}
\label{appendix_ablation_study}
%在本章中，我们以在2BIT下使用 TileQs 量化Qwen1.5-MoE-A2.7B模型为基础验证TileQ中的两个超参数对算法结果的影响和最终选择和的合理性。
In this section, we validate the impact and justify the final selection of two key hyperparameters in \textsc{TileQ} by scalar quantizing the Qwen1.5-MoE-A2.7B model to 2-bit.
\subsection{Ablation on Rank}
\label{appendix_ablation_rank}
% Table generated by Excel2LaTeX from sheet 'Ablations_rank'
\begin{table}[h]
  \centering
  \caption{Ablation on rank of Qwen1.5-MoE-A2.7B in 2bit \textsc{TileQ}$_S$, where Avg.Rank stands for the rank evenly distributed to each expert. Latency was tested in batch size 16. We set the rank 32 as the final setting in main evaluations.}
    % \begin{tabular}{c|c|cc|ccccccc}
    % \Xhline{4\arrayrulewidth}
    % \multirow{2}[0]{*}{Tile-Rank} & \multirow{2}[0]{*}{Latency} & \multicolumn{2}{c|}{Memory} & \multicolumn{7}{c}{Performance} \\
    %       &       & Avg.Rank & Ex.Bit & Wiki  & AC    & AE    & PQ    & WI    & MU    & HS \\
    % \hline
    % 8     & 3.6\% & 0.08  & 0.01  & 7.62  & 36.6  & 69.2  & 77.1  & 68.9  & 56.6  & 53.7 \\
    % 16    & 4.2\% & 1.63  & 0.03  & 7.56  & 37.8  & 69.7  & 77.8  & 69.2  & 57.3  & 53.9 \\
    % \rowcolor{black!10}32    & 5.0\% & 3.26  & 0.06  & 7.54  & 37.5  & 70.8  & 77.6  & 68.9  & 57.5  & 54.1 \\
    % 64    & 7.1\% & 6.5   & 0.11  & 7.53  & 37.8  & 70.6  & 78.1  & 68.9  & 57.7  & 54.4 \\
    % 128   & 9.1\% & 12.7  & 0.22  & 7.52  & 38.6  & 71.5  & 78.3  & 69    & 57.8  & 53.8 \\
    % \Xhline{4\arrayrulewidth}
    % \end{tabular}%

    \begin{tabular}{c|cc|cc|ccccccc}
    \Xhline{4\arrayrulewidth}
    \multirow{2}[0]{*}{Tile-Rank} & \multicolumn{2}{c|}{Latency} & \multicolumn{2}{c|}{Memory} & \multicolumn{7}{c}{Performance} \\
          & \multicolumn{1}{c}{Decode} & Prefill & Avg.Rank & Ex.Bit & Wiki  & AC    & AE    & PQ    & WI    & MU    & HS \\
    \hline
    8     & 3.6\% & 2.3\% & 0.08  & 0.01  & 7.62  & 36.6  & 69.2  & 77.1  & 68.9  & 56.6  & 53.7 \\
    16    & 4.2\% & 3.4\% & 1.63  & 0.03  & 7.56  & 37.8  & 69.7  & 77.8  & 69.2  & 57.3  & 53.9 \\
    \rowcolor{black!10}32    & 5.0\% & 4.2\% & 3.26  & 0.06  & 7.54  & 37.5  & 70.8  & 77.6  & 68.9  & 57.5  & 54.1 \\
    64    & 7.1\% & 7.5\% & 6.5   & 0.11  & 7.53  & 37.8  & 70.6  & 78.1  & 68.9  & 57.7  & 54.4 \\
    128   & 12.1\% & 10.3\% & 12.7  & 0.22  & 7.52  & 38.6  & 71.5  & 78.3  & 69    & 67.8  & 53.8 \\
    \Xhline{4\arrayrulewidth}
    \end{tabular}%
  \label{tab_ablation_rank}%
\end{table}%
As shown in Table~\ref{tab_ablation_rank}, we conduct an ablation study on the TileQ rank (denoted as \textit{Tile-Rank}) to investigate its impact on model performance, memory overhead, and inference latency in Qwen1.5-MoE-A2.7B. The results reveal several key observations.

Firstly, increasing the Tile-Rank consistently improves model accuracy across most benchmarks. This suggests that higher-rank approximations better preserve the expressive capacity of the original experts. However, despite the improvement in accuracy, the marginal gains diminish beyond rank 32 on many metrics, indicating a saturation point where additional rank yields diminishing returns. Secondly, both memory overhead (measured in average rank per expert and extra bits for factor storage) and decoding latency scale sublinearly with Tile-Rank, which bring a huge burden in memory cost and latency at a big rank (64 or more). 

In summary, we select \textbf{Tile-Rank = 32} as our default configuration, which strikes a favorable trade-off among accuracy, memory costs, and inference speed. At this setting, TileQ achieves near-peak performance on most general reasoning and commonsense tasks while maintaining low overhead—validating that structured low-rank sharing can effectively compress MoE models without sacrificing practical utility

\subsection{Ablation on Tile}
\label{appendix_ablation_tile}
Table~\ref{tab_ablation_tile} presents an ablation study on the 2D tiling configuration $(M, N)$, which governs how the 60 experts are arranged into a block matrix for shared low-rank approximation. We observe a clear U-shaped trade-off between inference latency and model performance with respect to layout aspect ratio. Extremely skewed layouts (e.g., $1\times60$ or $60\times1$) suffer from high decoding latency ($>$13\%) due to poor memory access patterns and limited cross-expert coherence in one dimension. Moreover, when the tiling dimension \(M\) are large, the decoding latency increases significantly. This is because a larger \(M\) leads to a taller shared left factor matrix \(\mathbf{U}\), which in turn requires more extensive \textit{gather-scatter} operations (Figure \ref{fig_tileq_inference_code}: line 29,38) during expert reconstruction, which is memory-bound and suffers from poor cache locality and limited vectorization efficiency. Consequently, as \(M\) grows, the proportion of time spent on these unstructured memory operations rises, degrading overall inference throughput—even though the total number of parameters and arithmetic operations remains unchanged. This explains why layouts with large \(M\) (e.g., \(32 \times 2\), \(60 \times 1\)) exhibit higher latency despite using the same Tile-Rank.

In contrast, near-square tilings—particularly $8\times8$ and $16\times4$—achieve the best balance: they minimize decode latency (as low as 5.1\%) while delivering peak or near-peak accuracy across most benchmarks (e.g., ARC Challenge reaches 39.6 at $8\times8$). This confirms that balanced 2D clustering enhances both hardware efficiency and representation sharing. Consequently, we adopt the $8\times8$ layout as our default, which provides the lowest latency without sacrificing task performance.
\begin{table}[h]
  \centering
  \caption{Ablation study on tiling dimensions $(M, N)$ in TileQ for Qwen1.5-MoE-A2.7B (2-bit). $M$ and $N$ denote the number of rows and columns in the 2D expert layout ($M \times N \geq K=60$). Latency is reported with batch size 16.}
    \begin{tabular}{cc|cc|ccccccc}
    \Xhline{4\arrayrulewidth}
    \multicolumn{2}{c|}{Tile-Dimention} & \multicolumn{2}{c|}{Latency} & \multicolumn{7}{c}{Performance} \\
    M& N     & \multicolumn{1}{c}{Decode} & Prefill & Wiki  & AC    & AE    & PQ    & WI    & MU    & HS \\
    \hline
    1     & 60    & 13.2\% & 5.4\% & 7.62  & 38.1  & 70.2  & 76.8  & 70.0  & 58.4  & 53.9  \\
    2     & 30    & 10.1\% & 5.3\% & 7.65  & 37.8  & 71.2  & 77.5  & 68.7  & 57.2  & 53.9  \\
    4     & 15    & 7.8\% & 5.2\% & 7.56  & 37.2  & 71.2  & 77.4  & 67.8  & 56.7  & 53.8  \\
    \rowcolor{black!10}8     & 8     & 5.1\% & 5.2\% & 7.56  & 39.6  & 72.5  & 77.8  & 68.9  & 57.0  & 53.5  \\
    16    & 4     & 8.1\% & 5.2\% & 7.52  & 39.0  & 72.6  & 77.9  & 68.5  & 56.9  & 53.5  \\
    32    & 2     & 11.4\% & 5.3\% & 7.54  & 37.4  & 69.1  & 77.7  & 68.7  & 57.0  & 53.4  \\
    60    & 1     & 15.2\% & 5.5\% & 7.53  & 37.2  & 70.0  & 77.1  & 68.6  & 56.8  & 53.3  \\
    \Xhline{4\arrayrulewidth}
    \end{tabular}%
  \label{tab_ablation_tile}%
\end{table}%

\section{Appendix Evaluation Results}
\label{appendix_evaluations}

\subsection{Inference on other GPU Architecture}
\label{appendix_sec_inference_efficiency}
% 我们在5090 (blackwall 架构)和H800 （hopper架构）上补充进行了单MoE模块推理时延实验。结果见图 \ref{fig_tileq_latency_h800}， \ref{fig_tileq_latency_5090}。 实验结果与A800基本一致：传统按专家并行的方案在稀疏模型下具有较高的时延，而合并低秩运算的TileQ方案在不同稀疏度的MoE模型下都具有极低的时延，并且在更稀疏的模型上具有更高的加速比。
% 此外，TileQ的合并方案在1D-Sharing中同样具有优秀的效果，即将1D看作row=1情况下的2D-Tiling。与Milo中的1d低秩计算相比具有3~20倍的加速。
% Insight： TileQ方案具有高加速比的原因主要源于以下两个方面：
%   Prefill：在Prefill时，主要为计算密集型，TileQ通过合并运算，将多个细碎的GEMM以一个大尺度矩阵代替，显著提高了计算访存比，可以更好地利用tensor计算硬件。从而达到当前batch峰值计算性能的60%以上。
%   Decode： 在Decode时主要为访存密集型，在此时TileQ通过消除了TopK算法导致的多次输入矩阵读取，将原始长度为$i$的token读取加权变为长度为$r$的低秩中间值加权计算，提高了20倍以上的访存量，并且消除了原始随专家数线性增长的函数调用次数至常数次。这两个原因共同造就了TileQ在Decode运算时同样具有极高的运算效率。

We conducted additional MoE-block inference latency evaluations on RTX 5090 (Blackwell architecture) and H800 (Hopper architecture). The results are presented in Figures~\ref{fig_tileq_latency_h800} and~\ref{fig_tileq_latency_5090}. The findings are consistent with those observed on the A800: conventional expert-parallel approaches exhibit high latency under sparse MoE configurations, whereas the TileQ scheme—by fusing low-rank operations—achieves consistently low latency across MoE models with varying sparsity levels, delivering higher speedups as model sparsity increases.

Furthermore, TileQ’s fused computation strategy remains highly effective under the 1D-Sharing paradigm, which can be interpreted as a special case of 2D tiling with $\text{row}=1$. Compared to the tradidional 1D low-rank computation employed in Milo, TileQ achieves a 3$\times$ to 20$\times$ speedup.

\textbf{Reason}: The significant acceleration achieved by TileQ primarily stems from two key factors:

\textit{Prefill Stage}: During prefill, computation is predominantly compute-bound. TileQ improves computational efficiency by fusing multiple fine-grained GEMM operations into a single large-scale matrix multiplication. This substantially increases the arithmetic intensity, enabling better utilization of tensor-core hardware and achieving over 60\% of the peak compute throughput for the current batch size.

\textit{Decode Stage}: In contrast, decoding is memory-bound. Here, TileQ eliminates repeated reads of the input matrix caused by the Top-$K$ routing mechanism. Instead of performing weighted aggregation over $\mathbb{R}^i$ tokens, it operates on a compressed low-rank intermediate representation of $\mathbb{R}^r$, thereby reducing memory access by more than 20$\times$. Additionally, it reduces the number of kernel calling from $K$—to constant (refer to Figure \ref{fig_tileq_inference_code}). Together, these optimizations enable TileQ to maintain exceptional computational efficiency even in the decode phase.

\begin{figure*}[h]
\centering
\includegraphics[width=1\linewidth]{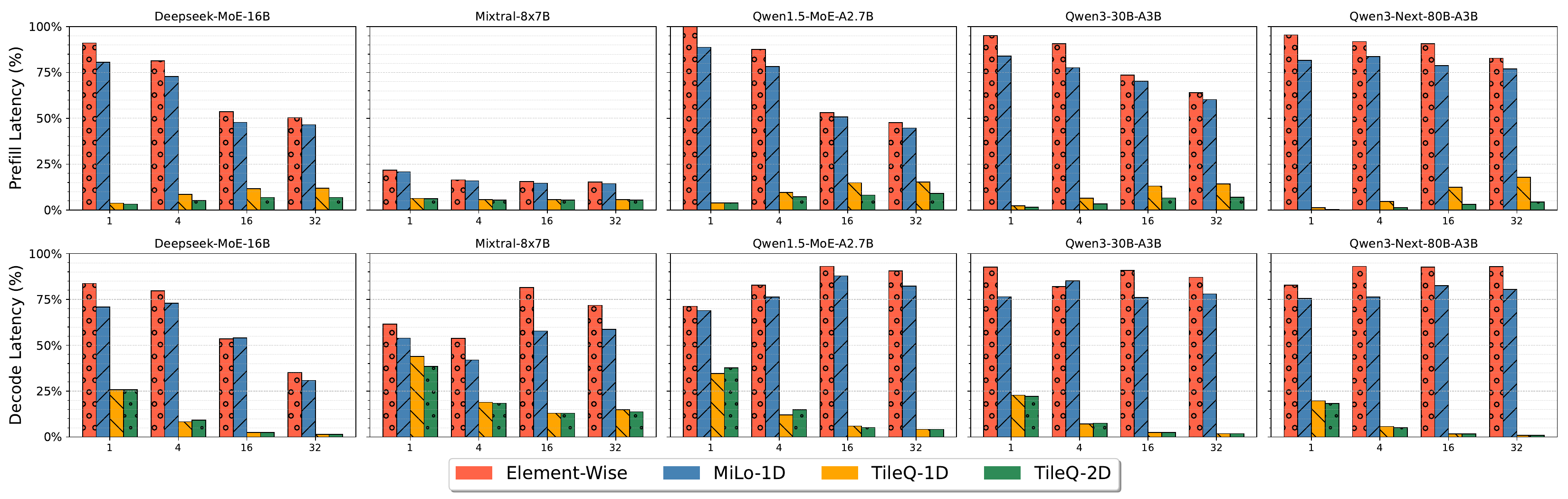}
\caption{Inference latency in MoE MLP-block on H800}
\label{fig_tileq_latency_h800}
\end{figure*}
\begin{figure*}[h]
\centering
\includegraphics[width=1\linewidth]{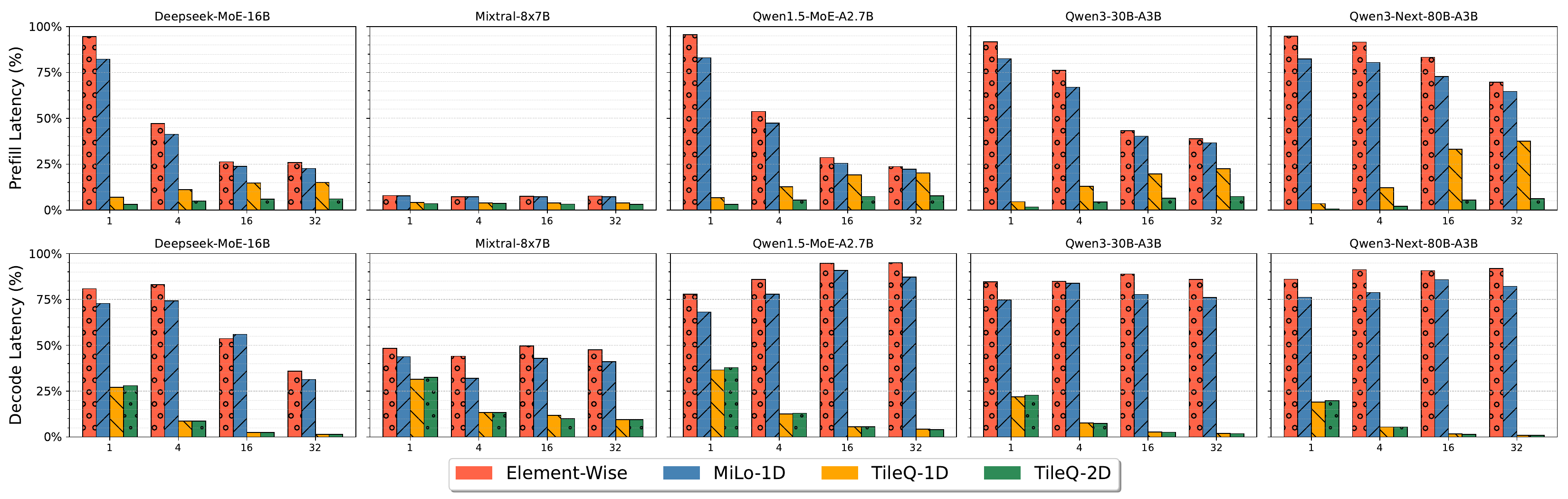}
\caption{Inference latency in MoE MLP-block on 5090}
\label{fig_tileq_latency_5090}
\end{figure*}
\label{appendix_ablations}

\subsection{Compare with other MoE methods}
% MoeQuant, MiLo, \textsc{MxMoE}, LoPRo
\label{appendix_moe_ptqs}

We noticed that several works on low-bit PTQ for MoE models have been published or accepted around the same time as our submission, including \textsc{MoeQuant}~\cite{chen2025moequant}, \textsc{MiLo}~\cite{huangmilo}, \textsc{MxMoE}~\cite{duanmu2025mxmoe}, and \textsc{LoPRo}~\cite{gu2026loproenhancinglowrankquantization}. 

For the remaining three methods— \textsc{MiLo}, and \textsc{MxMoE}—we were unable to conduct complete and fair comparisons due to multiple practical constraints:
\begin{itemize}
    % \item \textsc{MoeQuant} has not released its source code at the time of writing;
    \item \textsc{MiLo} and \textsc{MxMoE} lack sufficient implementation details (e.g., sequence length, average bit) and do not provide support for the newer model families evaluated in our work;
    \item \textsc{MxMoE} further relies on custom operators or hardware-specific primitives that are incompatible with our experimental infrastructure.
\end{itemize}

Nevertheless, to situate our approach within the broader research landscape, we include in this appendix a brief overview of these concurrent methods and perform limited case-wise comparisons wherever possible. Specifically, we extract reported results from their original papers under settings that align with ours—such as identical base models (e.g., Qwen1.5-MoE), quantization bitwidths (e.g., 2,3bit), and evaluation benchmarks (e.g., WikiText)—and compare them with our own results obtained under the same conditions.

\paragraph{Compare with MxMoE.}
% Table generated by Excel2LaTeX from sheet 'Moe_APPENDIX'
\begin{table}[htbp]
  \centering
  \caption{Compare with \textsc{MxMoE}.}
    \begin{tabular}{cccccccc}
    \Xhline{4\arrayrulewidth}
    Model & Method & Avg.bit$\downarrow$ & AC$\uparrow$    & AE$\uparrow$    & PQ$\uparrow$    & WI$\uparrow$    & PPL$\downarrow$ \\
    \hline
    \multirow{4}[0]{*}{Qwen1.5-MoE} & \textsc{MxMoE} & 2.25  & 31.7  & 53.3  & 71.3  & 61.2  & 8.79  \\
          & \cellcolor{blue!10}\textsc{TileQ}$_s$ & \cellcolor{blue!10}\textbf{2.16}  & \cellcolor{blue!10}\textbf{39.6}  & \cellcolor{blue!10}\textbf{67.3}  & \cellcolor{blue!10}\textbf{77.6}  & \cellcolor{blue!10}\textbf{68.9}  & \cellcolor{blue!10}\textbf{7.56}  \\
    \cdashline{2-8}
          & \textsc{MxMoE} & 3.25  & \textbf{43.8}  & 66.0  & 79.1  & 68.0  & 7.02  \\
          & \cellcolor{blue!10}\textsc{TileQ}$_s$ & \cellcolor{blue!10}\textbf{3.16}  & \cellcolor{blue!10}43.1  & \cellcolor{blue!10}\textbf{69.1}  & \cellcolor{blue!10}\textbf{79.9}  & \cellcolor{blue!10}\textbf{69.4}  & \cellcolor{blue!10}\textbf{6.94}  \\
    \hline
    \multirow{4}[0]{*}{Mixtral-8x7B} & \textsc{MxMoE} & 2.25  & 49.0  & 72.8  & 76.3  & 68.9  & 5.63  \\
          & \cellcolor{blue!10}\textsc{TileQ}$_s$ & \cellcolor{blue!10}\textbf{2.16}  & \cellcolor{blue!10}\textbf{58.8}  & \cellcolor{blue!10}\textbf{81.3}  & \cellcolor{blue!10}\textbf{81.5}  & \cellcolor{blue!10}\textbf{75.1}  & \cellcolor{blue!10}\textbf{4.98}  \\
    \cdashline{2-8}
          & \textsc{MxMoE} & 3.25  & 64.3  & 84.2  & \textbf{84.2}  & 75.9  & 4.15  \\
          & \cellcolor{blue!10}\textsc{TileQ}$_s$ & \cellcolor{blue!10}\textbf{3.16}  & \cellcolor{blue!10}\textbf{65.6}  & \cellcolor{blue!10}\textbf{85.5}  & \cellcolor{blue!10}83.8  & \cellcolor{blue!10}\textbf{76.4}  & \cellcolor{blue!10}\textbf{4.10}  \\
    \Xhline{4\arrayrulewidth}
    \end{tabular}%
  \label{table_compare_with_MxMoE}%
\end{table}%

The comparison between \textsc{TileQ} in scalar quantization with \textsc{MxMoE} are shown in Table \ref{table_compare_with_MxMoE}. We note that the downstream tasks metric reported for \textsc{MxMoE} corresponds to the \emph{acc\_norm} score used in their original paper, rather than \emph{acc} in lm-evaluation-harness \cite{evalharness}. To ensure a fair comparison, our \textsc{TileQ}$_s$ results in this table are evaluated using the same \emph{acc\_norm} protocol, which is different form Table \ref{tab_MAIN}.

The results demonstrate that at the 2-bit setting, \textsc{TileQ}$_s$ consistently outperforms \textsc{MxMoE} across all metrics—achieving notably higher scores in all tasks (AC, PQ, PPL...). For instance, on Mixtral-8x7B, \textsc{TileQ}$_s$ reduces perplexity from 5.63 to 4.98 and improves AE by 8.5 points. At the 3-bit level, \textsc{TileQ}$_s$ is still maintaining advantages in most categories (e.g., +0.3 in WI and -0.05 in PPL on Mixtral). 

\paragraph{Compare with MiLo.}
Then we make a comparison against \textsc{MiLo}~\cite{huangmilo}, which only reports results at approximately 3-bit level quantization. Notably, the ``HS'' metric in their work is also based on \emph{acc\_norm} and we adopt the same evaluation protocol for consistency. As shown in Table~\ref{table_compare_with_MiLo}, \textsc{TileQ} consistently outperforms \textsc{MiLo} across all evaluated metrics and models. On Deepseek-MoE, \textsc{TileQ} improves PIQA by 1.3, HellaSwag by 3.9, and MMLU by 2.4 percent while simultaneously reducing perplexity from 6.42 to 6.24. The advantage becomes even more pronounced on the larger Mixtral-8x7B model with fewer memory costs (avg.bit: 3.16 vs. 3.40).
% Table generated by Excel2LaTeX from sheet 'MoE_Appendix_milo'
\begin{table}[htbp]
  \centering
  \caption{Compare with \textsc{MiLo}.}
    \begin{tabular}{ccccccc}
    \Xhline{4\arrayrulewidth}
    Model & Method & Avg.bit & PQ    & HS    & MU    & PPL \\
    \hline
    \multirow{2}[0]{*}{Deepseek-MoE} & \textsc{MiLo}  & 3.4   & 78.9  & 74.6  & 41.9  & 6.42  \\
          & \cellcolor{blue!10}\textsc{TileQ}$_s$ & \cellcolor{blue!10}3.16  & \cellcolor{blue!10}80.2  & \cellcolor{blue!10}78.5  & \cellcolor{blue!10}44.3  & \cellcolor{blue!10}6.24  \\
    \hline
    \multirow{2}[0]{*}{Mixtral-8x7B} & \textsc{MiLo}  & 3.4   & 81.3  & 82.2  & 67.1  & 4.03  \\
          & \cellcolor{blue!10}\textsc{TileQ}$_s$ & \cellcolor{blue!10}3.16  & \cellcolor{blue!10}83.8  & \cellcolor{blue!10}86.0  & \cellcolor{blue!10}69.5  & \cellcolor{blue!10}4.12  \\
    \Xhline{4\arrayrulewidth}
    \end{tabular}%
  \label{table_compare_with_MiLo}%
\end{table}%

\paragraph{Summary.}
Across all MoE-specific PTQ methods—\textsc{LoPRo}, \textsc{MxMoE}, \textsc{MiLo}, and \textsc{MoEQuant}— \textsc{TileQ} demonstrates consistently stronger performance under comparable or even more aggressive quantization budgets. These comparisons collectively prove that \textsc{TileQ} offers a more robust and effective approach to low-bit PTQ for MoE models.

\section{Limitations and Future Work}
\textbf{Computational Bottleneck in Quantization.} The end-to-end runtime of TileQ is dominated by the post-decomposition quantization step (e.g., GPTQ or GPTVQ), particularly for sparse MoE models with numerous small experts (e.g., Qwen3-30B-A3B). Although the 2D tiling and low-rank decomposition stages are computationally lightweight, the quantization backend remains a scalability bottleneck. A key direction for future work is the co-design of quantization algorithms tailored specifically for low-rank–structured MoE weights, potentially integrating quantization-aware low-rank optimization to reduce calibration overhead.

\textbf{Integration with Complementary Compression Techniques.} This work focuses exclusively on low-rank quantization. An important avenue for future exploration is the synergistic combination of TileQ with other compression paradigms, such as structured pruning (to remove redundant experts or tokens) or knowledge distillation. Such hybrid approaches could yield further reductions in model size and latency without compromising task performance.

\end{document}